\newif\iffull
%\fulltrue
\fulltrue

\iffull
\documentclass[11pt,twoside]{article}
\usepackage{fullpage,amsmath,amsthm}
\else
\documentclass[anon,12pt]{colt2018}
\usepackage{times}
\fi

\usepackage{amssymb,color,hyperref}
\usepackage{vfmacros,framed,algorithm,algorithmic}
\newcommand{\INDSTATE}[1][1]{\STATE\hspace{#1\algorithmicindent}}

\iffull
\usepackage[style=alphabetic,backend=bibtex,maxbibnames=20,maxcitenames=6,giveninits=true,doi=false,url=false]{biblatex}
\newcommand*{\citet}[1]{\AtNextCite{\AtEachCitekey{\defcounter{maxnames}{2}}} \textcite{#1}}

\newcommand*{\citep}[1]{\cite{#1}}

\bibliography{vf-allrefs-central,dp-api}
\fi

\newif\ifnotes
\notestrue
%\notesfalse
\ifnotes
        \newcommand{\mnote}[1]{{\sf \textcolor{blue}{#1}}}
\else
		\newcommand{\mnote}[1]{}
\fi

\providecommand{\err}{\mathsf{Err}}
\providecommand{\opt}{\mathsf{Opt}}
\providecommand{\K}{{\mathcal K}}

\providecommand{\D}{{\mathcal D}}
\providecommand{\cE}{{\mathcal E}}

\providecommand{\cP}{{\mathcal P}}

\newcommand{\Thr}{\mathsf{Thr}}
\newcommand{\proj}{\mathsf{Proj}}
\newcommand{\epw}{\mathrm{ExpPW}}

\begin{document}

\title{Privacy-preserving Prediction\footnotetext{Accepted for presentation at Conference on
Learning Theory (COLT) 2018.}}
\iffull
\author{Cynthia Dwork \\ Harvard University \and Vitaly Feldman \\ Google Brain}
\else
 \coltauthor{ \Name{Cynthia Dwork} \\%%\Email{alkls@thomas-steinke.net}\\
 \addr Harvard University \AND
\Name{Vitaly Feldman}\\% \Email{vitaly@post.harvard.edu}\\
 \addr Google Brain
 }
\fi

\date{}
\maketitle

\begin{abstract}
Ensuring differential privacy of models learned from sensitive user data is an important goal that has been studied extensively in recent years.
It is now known that for some basic learning problems, especially those involving high-dimensional data, producing an accurate private model requires much more data than learning without privacy. At the same time, in many applications it is not necessary to expose the model itself. Instead users may be allowed to query the prediction model on their inputs only through an appropriate interface. Here we formulate the problem of ensuring privacy of individual predictions and investigate the overheads required to achieve it in several standard models of classification and regression.

We first describe a simple baseline approach based on training several models on disjoint subsets of data and using standard private aggregation techniques to predict. We show that this approach has nearly optimal sample complexity for (realizable) PAC learning of any class of Boolean functions. At the same time, without strong assumptions on the data distribution, the aggregation step introduces a substantial overhead.
We demonstrate that this overhead can be avoided for the well-studied class of thresholds on a line and for a number of standard settings of convex regression. The analysis of our algorithm for learning thresholds relies crucially on strong generalization guarantees that we establish for all differentially private prediction algorithms.
\end{abstract}

\section{Introduction and problem formulation}
In machine learning tasks, the training data often consists of information collected
from individuals. This data can be highly sensitive, for example in the case
of medical or financial information, and therefore privacy-preserving data analysis is becoming an increasingly important area of study in machine learning, data mining and statistics \citep{DworkSmith:09,SarwateC:13,DworkRoth:14}. We rely on the well-studied differential privacy model of privacy that has become a de facto standard for formal understanding of privacy \citep{DworkMNS:06}.% (see Sec.\ref{sec:prelims-dp} for the formal definition).
% Differential privacy gives a formal semantic guarantee of privacy, saying intuitively that no single individual's data has too large of an effect on the output of the algorithm, and therefore observing the output of the algorithm does not leak much information about an individual's private data

The standard setting of privacy-preserving learning aims to ensure that the model learned from the data is produced in a differently private way. Thus this approach preserves privacy even when a potential adversary has complete access to the description of the predictive model. The downside of this strong guarantee is that for some learning problems, achieving the guarantee is known to have substantial additional costs. More examples are needed to achieve the same level of accuracy (or lower accuracy is achievable for a given number of examples). In addition, private learning may require new and computationally less efficient algorithms.

In this work we consider learning in a setting where the description of the learned model is not accessible to the (potentially adversarial) user(s). Instead the users have access to the model through an interface (often referred to as an API). For an input point the interface provides the value of the predictive model on that point. This view is appropriate for many existing applications where user privacy is a concern. For example, companies that collect data about their users usually expose only a cloud-based interface to the models they train on user data. Credit rating bureaus only allow access to their models through an electronic interface. In addition, it may enable new applications where privacy considerations are currently preventing the use of predictive models trained on sensitive user data. For example, in medical diagnostics a prediction interface would suffice for most applications.

Allowing such restricted access may appear to pose no risk to individual privacy. However, as recently demonstrated by \citet{ShokriSSS17}, blackbox access to Amazon ML and Google prediction APIs suffice for successful membership inference attacks. Membership inference is the task in which given a user's record the goal is to infer whether the record was used for training the model. This information is known to be sensitive in several contexts. Membership inference can also be used to complete partial records revealing the values of sensitive attributes. Even more recently, \citet{LongBWBW18} demonstrated several additional successful membership inference attacks based on blackbox access. Further, \citet{CarliniLKES18} proposed a more formal way to measure the degree to which sensitive information is memorized by generative sequence models and explored several techniques to extract sensitive information using black box access to such models. The use of differentially private learning algorithms to protect against such attacks has been proposed in \citep{ShokriSSS17} and briefly explored in \citep{CarliniLKES18}.

We now describe the setting more formally. For a prediction problem over a domain $X$ and label space $Y$, a prediction interface is an algorithm that has access to a dataset $S \in (X\times Y)^n$ and given a query point $x\in X$ outputs a value $y \in Y$. The algorithm can be queried multiple times and is stateful (namely, responses can depend on previous queries). We define the privacy of such an interface in the same way as usually done for interactive algorithms. Namely, for a prediction interface $M$ and a stateful query generating algorithm $Q$ we denote by $(Q\rightleftarrows M(S))$ the sequence of queries and responses generated in the interaction of $Q$ and $M$ on dataset $S$.
\begin{defn}[Private prediction interface]
A prediction interface $M$, is $(\eps,\delta)$-differentially private if for every interactive query generating algorithm $Q$, the output $(Q\rightleftarrows M(S))$ is $(\eps,\delta)$-differentially private with respect to dataset $S$.
\end{defn}

While the problem setting has many facets that merit investigation, we focus on perhaps the most basic question: what is the cost of ensuring privacy of a single prediction. In other words, we focus on the problem of answering a single prediction query. Composition properties of differential privacy imply that such an algorithm can be used to answer multiple queries with privacy parameters that degrade gracefully with the number of queries \citep{DworkRoth:14}. Therefore such an algorithm is a natural building block for constructing an algorithm that can answer multiple queries. Naturally, better ways of dealing with sequences of queries might exist and the general topic of answering interactive sequences of queries has been studied extensively in the differential privacy literature (see \citep{DworkRoth:14} for an overview).

An algorithm $M$ that answers a single query $x$ defines a randomized prediction at $x$ and hence such an algorithm implicitly defines a learning algorithm that outputs a randomized predictor $h(x) = M(S,x)$.
\begin{defn}
\label{def:prediction-privacy}
Let $M$ be an algorithm that given a dataset $S\in (X\times Y)^n$ and a point $x$ produces a value in $Y$. We say that $M$ is {\em  $(\eps,\delta)$-differentially private prediction} algorithm if for every $x \in X$, the output $M(S,x)$ is $(\eps,\delta)$-differentially private with respect to $S$. We use $M(S)$ to refer to the (randomized) function $M(S,\cdot)$.
\end{defn}

This definition allows us to treat this building block in the same way as regular learning algorithms and discuss it in the context of standard statistical learning models.

Two standard and closely related models for classification we will look at are PAC (or realizable) learning  \citep{Valiant:84} and agnostic \citep{Haussler:92,KearnsSS:94} learning.  In the PAC learning model the algorithm is given random examples in which each point is sampled i.i.d.~from some unknown distribution over the domain and is labeled by an unknown function from a set of functions $C$. In the agnostic learning model the algorithm is given examples sampled i.i.d.~from an arbitrary (and unknown) distribution over labeled points. The goal of the learning algorithm in both models is to output a hypothesis whose prediction error on the distribution from which examples are sampled is within additive $\alpha$ of the prediction error of the best function in $C$ (which is $0$ in the PAC model). See Sec.~\ref{sec:prelims} for formal definitions.

We will also consider a more general regression setting in which we are given a loss function $\ell: \R\times Y \rightarrow \R$ and the goal is to design a private prediction algorithm $M$ that minimizes  $$\cE_{\cP}[\ell(M(S)] = \E_{M, (x,y) \sim \cP}[\ell(M(S,x),y)] ,$$
where $\cP$ is an unknown probability distribution  over $X\times Y$.

%\vnote{The general spirit for defining error guarantees is that we want to be competitive with the error achievable by non-private APIs. In general,  guarantees could depend on the dataset and the sequence of queries. But for now let's focus on the i.i.d. case.}

\section{Overview of the results}
We first consider a natural ``baseline'' approach to this problem based on private aggregation of non-private learning algorithms.

\subsection{Private aggregation of non-private models}
To produce a prediction differentially privately we partition the dataset $S$ into several subsamples $S_1,\ldots,S_r$ and run a non-private learning algorithm on each of those subsamples too obtain predictors $f_1,\ldots,f_r$. Now given a point $x$ we use a differentially private aggregation technique on values $f_1(x),\ldots,f_r(x)$ and output the result. Several such subsample-and-aggregate techniques are known \citep{NissimRS07,DworkLei:09,SmithT13,DworkRoth:14} that carefully exploit properties of the distribution over results on subsamples. A significant advantage of this approach is that it does not require a new learning algorithm and hence is easy to implement (there is an additional computational cost that is easy to parallelize).

Obviously, using $r$ subsamples requires more data than non-private learning and therefore it is natural to ask whether this approach is optimal and how it compares to differentially private learning in the standard setting. We discuss these questions in the context of specific problems below.

\paragraph{PAC Learning:}
For PAC learning (or realizable case) accurate models $f_1,\ldots,f_r$ have to be close to the true labeling function $f$ (that is, they disagree with probability at most $\alpha$).
In particular, the fraction of points on which more than $1/4$ of the predictors output the wrong label cannot be more than $4\alpha$. Outputting the correct label with privacy is easy in this setting and we do this using a soft majority vote (or, equivalently, the exponential mechanism \citep{McSherryTalwar:07} on the label counts). A number of other approaches would give comparable guarantees. A simple analysis shows that using $r = O(\ln(1/\alpha)/\eps)$ this reduction ensures $\eps$-differentially private prediction (see Thm.~\ref{thm:pac-reduction} for a formal statement).

As an immediate corollary of this reduction and standard bounds on the sample complexity of PAC learning we obtain the following upper bound.% on the sample complexity of $\eps$-differential private prediction for PAC learning.
\begin{cor}
\label{cor:pac}
Let $C$ be a class of Boolean functions of VC dimension $d$. Then for all $\alpha,\beta,\eps > 0$, there exists an $\eps$-differentially private prediction algorithm $M$ that PAC learns $C$ with error $\alpha$ and confidence $1-\beta$ given $n = \tilde O\lp\frac{d + \log(1/\beta)}{\eps\alpha}\rp$ examples.
\end{cor}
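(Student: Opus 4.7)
The plan is to combine Theorem~\ref{thm:pac-reduction} with the classical VC sample complexity bound for (non-private) PAC learning. Concretely, I would partition $S$ into $r = \Theta(\ln(1/\alpha)/\eps)$ disjoint subsamples, run a standard (non-private) empirical risk minimizer for $C$ on each one to obtain predictors $f_1,\ldots,f_r$, and have the interface answer a query $x$ by the soft-majority vote over $f_1(x),\ldots,f_r(x)$ implemented via the exponential mechanism on label counts. The reduction already yields $\eps$-differential privacy of the interface; the remaining work is the utility analysis, i.e.\ controlling the expected prediction error of the resulting randomized predictor.

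To bound the error I would proceed in two steps. First, pick a target per-learner accuracy $\alpha' = \Theta(\alpha)$ and per-learner confidence $\beta' = \beta/r$. The classical VC bound guarantees that $O((d + \log(1/\beta'))/\alpha')$ i.i.d.\ labeled examples suffice for ERM to be $\alpha'$-accurate with probability $\geq 1-\beta'$, and a union bound ensures that all $r$ predictors are simultaneously $\alpha'$-accurate except with probability at most $\beta$. Second, conditioned on that good event, the expected number of $f_i$'s that mislabel a random query $x$ is at most $r\alpha'$, so by Markov's inequality the fraction of $x$'s on which more than $r/4$ of the $f_i$'s err is at most $4\alpha'$; on the remaining $x$'s the count gap is at least $r/2$, so the exponential mechanism returns the correct label except with probability $e^{-\Omega(\eps r)} \leq \alpha$ by the choice of $r$. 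Setting $\alpha' = \alpha/8$ (say) absorbs both contributions into a total expected error at most $\alpha$, yielding the claimed PAC guarantee.

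Multiplying the number of subsamples $r$ by the per-subsample budget gives
\[ n \;=\; O\!\lp\frac{\log(1/\alpha)}{\eps}\rp \cdot O\!\lp\frac{d + \log(r/\beta)}{\alpha}\rp \;=\; \tilde O\!\lp\frac{d + \log(1/\beta)}{\eps\alpha}\rp, \]
where $\tilde O$ absorbs $\log\log(1/\alpha)$ and $\log(1/\eps)$ factors, matching the stated bound. I do not foresee a substantive obstacle: the only care needed is the constant-factor bookkeeping between $\alpha'$, the $4\alpha'$ Markov fraction, and the exponential-mechanism failure probability, and Theorem~\ref{thm:pac-reduction} is set up precisely so that this accounting goes through.
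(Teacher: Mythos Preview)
Your proposal is correct and matches the paper's approach essentially line for line: invoke Theorem~\ref{thm:pac-reduction} (whose proof is precisely the soft-majority/exponential-mechanism aggregation with the Markov argument you sketch), then plug in the standard VC sample-complexity bound with per-learner parameters $\alpha' = \Theta(\alpha)$ and $\beta' = \beta/r$, and multiply by $r = \Theta(\ln(1/\alpha)/\eps)$. The only cosmetic differences are the choice of threshold ($r/4$ versus the paper's $r/3$ in the detailed proof) and which form of the VC bound you cite; neither affects the $\tilde O$ conclusion.
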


It turns out that this simple approach is essentially optimal in the worst case. Specifically, we prove that the sample complexity of this problem is $\Omega(d/(\eps\alpha))$ even when $\delta$ is as large as $\eps/3$.
\begin{thm}
\label{thm:pac-lower-bound}
Let $C$ be a class of Boolean functions of VC dimension $d$. Then for all $\alpha,\eps > 0$, any $(\eps,\eps/3)$-differentially private prediction algorithm $M$ that PAC learns $C$ with error $\alpha$ and confidence $1/12$ requires $n=\Omega(d/(\eps\alpha))$ examples.
\end{thm}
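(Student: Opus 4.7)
The plan is a coupling/group-privacy argument on a carefully chosen hard family of distributions, where the accuracy requirement forces the output probability at a query point to shift by $\Omega(1)$ between two neighboring concepts while group privacy caps the same shift at $O(\alpha\eps n/d)$. The crucial quantitative trick is to concentrate most of the distribution's mass on an always-safe point so that the $\alpha$ in the error tolerance turns into a $1/\alpha$ factor in the sample complexity.

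Let $T=\{x_1,\ldots,x_d\}$ be shattered by $C$ and designate $x_1$ as the safe point. By shattering, for every $\sigma\in\{0,1\}^{d-1}$ there is $c_\sigma\in C$ with $c_\sigma(x_1)=0$ and $c_\sigma(x_i)=\sigma_{i-1}$ for $i\ge 2$. Let $\cP_\sigma$ assign mass $1-4\alpha$ to $(x_1,0)$ and mass $4\alpha/(d-1)$ to $(x_i,\sigma_{i-1})$ for each $i\ge 2$ (assume $\alpha<1/4$; the larger-$\alpha$ case follows from this one). Defining $r_i(\sigma):=\Pr_{S\sim\cP_\sigma^n,M}[M(S,x_i)\ne\sigma_{i-1}]$, decomposing the error under the $11/12$-probability good event, and adding the $1/12$ slack (using that per-point errors are bounded by $1$) yields $\sum_{i=2}^d r_i(\sigma)\le (d-1)/3$ for every $\sigma$.

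Let $Z_i(\sigma):=\Pr[M(S,x_i)=1]$. Since $\sigma$ and $\sigma\oplus e_i$ assign opposite labels to $x_i$, $|Z_i(\sigma)-Z_i(\sigma\oplus e_i)|\ge 1-r_i(\sigma)-r_i(\sigma\oplus e_i)$. Averaging over uniform $\sigma\in\{0,1\}^{d-1}$ and uniform $i\in\{2,\ldots,d\}$ (and using that $\sigma\oplus e_i$ is uniform whenever $\sigma$ is), this gives $\E_{\sigma,i}|Z_i(\sigma)-Z_i(\sigma\oplus e_i)|\ge 1-2/3=1/3$. For the opposing bound, couple $S\sim\cP_\sigma^n$ and $S'\sim\cP_{\sigma\oplus e_i}^n$ by drawing the same $x$-sequence and flipping the label only at positions where $x=x_i$; the safe point contributes no differences since $c_\sigma(x_1)=c_{\sigma\oplus e_i}(x_1)=0$, so the number of differing positions is $k'\sim\mathrm{Bin}(n,4\alpha/(d-1))$. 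Group privacy for $(\eps,\delta)$-DP applied to the single-query mechanism $M(\cdot,x_i)$ then gives $|Z_i(\sigma)-Z_i(\sigma\oplus e_i)|\le (e^{\eps k'}-1)+k'e^{\eps k'}\delta$. Taking expectation over $k'$ with $\delta=\eps/3$, and using the Binomial MGF bounds $\E[e^{\eps k'}]\le\exp((e^\eps-1)\cdot 4\alpha n/(d-1))$ and $\E[k'e^{\eps k'}]\le(4\alpha n/(d-1))e^\eps\E[e^{\eps k'}]$, yields $\E_{\sigma,i}|Z_i(\sigma)-Z_i(\sigma\oplus e_i)|=O(\alpha\eps n/d)$ in the regime $\alpha\eps n/d=O(1)$. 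Combined with the $\ge 1/3$ lower bound, this forces $n=\Omega(d/(\eps\alpha))$.

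The main obstacle is handling the $\delta=\eps/3$ term in the expected group-privacy bound: since $\delta$ here is comparable to $\eps$ rather than negligibly small, we need $\E[k'e^{\eps k'}]=O(1/\eps)$ rather than a smaller order, which is achieved precisely because the random group size $k'$ has expectation $4\alpha n/(d-1)=O(1/\eps)$ in the regime of interest. A secondary technicality is the translation from the high-probability accuracy guarantee to the per-coordinate sum bound above, which relies on nothing more than the trivial $\Pr[\cdot]\le 1$ bound applied to the $1/12$ failure mass.
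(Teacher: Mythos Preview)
Your proposal is correct and follows essentially the same strategy as the paper: both build the hard instance on a shattered set with mass $1-4\alpha$ concentrated on a safe point and use group privacy to bound how much flipping the label at a single shattered point can change the prediction probability there, obtaining a contradiction when $n=o(d/(\eps\alpha))$.

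The execution differs in two minor ways that are worth noting. First, the paper selects a single index $i$ with $p_i\le 1/3$ and argues by contradiction at that index, whereas you average over $i$ and over the random labeling $\sigma$ (a Le Cam/Assouad-style averaging); both routes arrive at the same $\Omega(1)$ gap. Second, the paper conditions on the high-probability event that at most $O(1/\eps)$ samples land on the chosen index and then applies group privacy with a fixed group size, while you integrate group privacy against the binomial count $k'$ using the MGF bound $\E[e^{\eps k'}]\le\exp(n p(e^\eps-1))$ and the derivative identity for $\E[k'e^{\eps k'}]$. Your handling folds the reduction to general $\alpha$ directly into the construction rather than first proving the $\alpha=1/4$ case, which is a slight streamlining; the paper's conditioning avoids the MGF calculation in exchange for carrying an explicit failure-probability term. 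Both are clean and yield the same bound.
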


For comparison, \citet{KasiviswanathanLNRS11} showed that the sample complexity of differentially privately PAC learning a class $C$ over domain $X$ is $O(\log(|C|)/(\eps\alpha))$. By Sauer's lemma, $\log(|C|) = O(d \cdot \log(|X|))$ and therefore the multiplicative gap between these two measures can be as large as $\log(|X|)$. The sample complexity of $\eps$-differentially private PAC learning was subsequently shown to be $\tilde{\Theta}(R/(\eps\alpha))$, where $R$ is the so-called representation dimension of $C$ \citep{BeimelNS:13}. However, as shown in \citep{FeldmanXiao15}, for many classes the gap between $R$ and the VC dimension is still roughly $\log(|X|)$. For example, the representation dimension of linear threshold functions over $[N]^p$ is $p^2 \cdot \log N$ whereas the VC dimension is just $p$.

We remark that the technique we use to prove the lower bound in Thm.~\ref{thm:pac-lower-bound} is different from those used for proving lower bounds in the standard setting of learning with privacy.

\paragraph{Agnostic learning:}
In agnostic learning, the labels $f_1(x),\ldots,f_r(x)$ no do not necessarily agree on most points $x$ and taking the majority vote may even reduce the accuracy. In this setting we predict by first averaging the non-private predictions to obtain $v(x)=\fr{r}(f_1(x)+\cdots+f_r(x))$ and then outputting $1$ with probability $v(x) + \zeta$ (truncated to range $[0,1]$), where $\zeta$ is a Laplace noise variable. It is not hard to show that for $r=O(1/(\eps\alpha))$, this approach ensures that the prediction will be $\eps$-differentially private and the addition of noise increases the prediction error by at most an extra $\alpha$ term (see Cor.~\ref{cor:agnostic-reduction}). As a corollary of this reduction, we obtain the following upper-bound on the sample complexity in this setting.
\begin{cor}
\label{cor:agnostic-sample-complexity-intro}
Let $C$ be a class of Boolean functions of VC dimension $d$. Then for all $\alpha,\beta,\eps > 0$ there exists an $\eps$-differentially private prediction algorithm $M$ that agnostically learns $C$ with excess error $\alpha$ and confidence $1-\beta$ given $n = \tilde O\lp\frac{d  + \log(1/\beta)}{\eps\alpha^3}\rp$ examples.
\end{cor}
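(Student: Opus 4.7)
The plan is to combine the reduction mentioned in the preceding paragraph (formalized as Cor.~\ref{cor:agnostic-reduction}) with a standard non-private agnostic learner applied to each of the $r$ subsamples. Concretely, I would set $r = O(1/(\eps\alpha))$ as prescribed by the reduction, so that the Laplace-noise aggregation step is $\eps$-differentially private and contributes at most $\alpha/2$ (say) to the excess error. Each of the $r$ non-private learners then needs to return a predictor with excess error at most $\alpha/2$, since the averaged randomized predictor $v(x)$ has excess error bounded by the average excess error of the $f_i$'s (by linearity of the $0$-$1$ loss viewed as a randomized prediction).

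Next, I invoke the classical uniform-convergence sample-complexity bound for agnostic learning of a VC-dimension-$d$ class: empirical risk minimization over $C$ achieves excess error $\alpha/2$ with confidence $1-\beta'$ using $m = O((d+\log(1/\beta'))/\alpha^2)$ i.i.d.~examples. To make all $r$ base learners simultaneously succeed with overall confidence $1-\beta$, I set $\beta' = \beta/r$ and union-bound over the subsamples, so each subsample requires
\[
m = O\!\left(\frac{d + \log(r/\beta)}{\alpha^2}\right) = O\!\left(\frac{d + \log(1/(\eps\alpha\beta))}{\alpha^2}\right)
\]
examples. Multiplying by $r$ gives the total sample complexity
\[
n = r \cdot m = O\!\left(\frac{d + \log(1/(\eps\alpha\beta))}{\eps\alpha^3}\right),
\]
which is $\tilde O((d+\log(1/\beta))/(\eps\alpha^3))$ as claimed, with the $\tilde O$ absorbing the $\log(1/(\eps\alpha))$ factor.

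There is no real obstacle here, since this is essentially a bookkeeping corollary of Cor.~\ref{cor:agnostic-reduction} plus the standard agnostic VC sample-complexity bound; the only mild subtlety is allocating the $\alpha$ budget between the base learners' excess error and the additional $O(1/(r\eps))$ loss from Laplace noise, and allocating the $\beta$ budget across the $r$ subsamples via a union bound. Privacy is immediate from Cor.~\ref{cor:agnostic-reduction} once the $r$ non-private learners are run on disjoint subsamples.
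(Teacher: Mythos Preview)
Your proposal is correct and essentially identical to the paper's own argument: the paper applies Cor.~\ref{cor:agnostic-reduction} (which packages exactly the $r=O(1/(\eps\alpha))$ subsampling-plus-Laplace-noise reduction you describe, with the confidence of the base learner set to $2\beta\alpha\eps \approx \beta/r$) and then plugs in the standard agnostic VC bound $m(\alpha,\beta)=O((d+\log(1/\beta))/\alpha^2)$ to obtain $n=O\bigl((d+\log(1/(\alpha\beta\eps)))/(\eps\alpha^3)\bigr)$. Your allocation of the $\alpha$ and $\beta$ budgets and the resulting arithmetic match the paper's exactly.
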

In this case the upper bound is much worse than the lower bound of $\Omega(d/\alpha^2 + d/(\eps\alpha))$ implied by Thm.~\ref{thm:pac-lower-bound}. For comparison, $\eps$-differentially private agnostic learning can be done using $\tilde O(d/\alpha^2 + R/(\eps\alpha))$ examples, where $R$ is the representation dimension of $C$ mentioned above \citep{BeimelNS:13,FeldmanXiao15}. As a result, for classes such that $R=O(d)$ a differentially private learning algorithm matches the lower bound for private prediction. This leads to a natural question of whether it is possible to match the lower bound for all classes $C$. While we do not answer this question for arbitrary classes $C$, we give an example of an algorithm that goes beyond these two approaches. Specifically, it agnostically learns $C$ with $\eps$-private prediction using $\tilde O(d/\alpha^2 + d/(\eps\alpha))$ examples whereas learning $C$ with privacy in the standard model requires an infinite number of examples.

\paragraph{Convex regression:}
Our analysis of agnostic learning can be seen as a special case of a more general analysis of prediction problems with convex loss functions. Specifically, the aggregation by averaging can be seen as a way to increase the {\em uniform prediction stability} of a learning algorithm. A learning algorithm is uniformly prediction stable with rate $\gamma$ if for predictors $f_S$ and $f_{S'}$ produced on any pair of datasets $S,S'$ that differ on a single element and any point $x$, $|f_S(x) - f_{S'}(x)| \leq \gamma$. As follows immediately from this definition, a uniformly prediction stable learning algorithm can be converted to a differentially private prediction algorithm simply by adding Laplace (or Gaussian) noise to the prediction (see Lem.~\ref{lem:stability-to-privacy}). Hence it reduces our problem to the problem of finding a uniformly prediction stable learning algorithm with sufficiently low rate of stability. Aggregation by averaging the predictors obtained by running a learning algorithm on $r$ disjoint datasets can be seen as improving its uniform prediction stability by a factor of $r$. Convexity of the loss function, in turn, ensures that such averaging preserves the guarantees on the expected loss of the algorithm (see Lem.~\ref{lem:amplify-stability} for a formal statement).

We demonstrate how this general approach can be applied to convex regression problems.  Specifically, we consider problems in which we have a family of predictors $\{ f(w,\cdot)\}_{w\in \K}$ parameterized by a vector $w \in \K $, where $\K \subset \R^d$ is some convex body, $\ell$ is a convex loss function and $\ell(f(\cdot,x),y)$ is a convex function of $w$ over $\K$ for all $(x,y) \in X\times Y$. The goal is to find $\hat{w}$ such that $$\E_{(x,y)\sim\cP}[\ell(f(\hat{w},x),y)] \leq \min_{w \in \K} \E_{(x,y)\sim\cP}[\ell(f(w,x),y)]+\alpha,$$  where $\cP$ is an unknown distribution over examples. This setting captures many important learning problems and has also been extensively investigated in the privacy literature (see \citep{ChaudhuriMS11,KiferST12,BassilyST14,TalwarTZ15,WangYX17} and references therein). For the purpose of comparison with sample complexity bounds known in this literature we restrict our attention to a basic setting in which $\K$ is a subset of the unit Euclidean ball and $\ell(f(w,x),y)$ is 1-Lipschitz in $w$ for all $(x,y)$ in support of $\cP$. For this setting it is known that $\tilde{O}(d/(\eps \alpha^2))$ samples suffice to solve the problem with $\eps$-differential privacy and $\tilde{O}(\sqrt{d}\log^4(1/\delta)/(\eps \alpha^2))$ samples suffice for $(\eps,\delta)$-differential privacy \citep{BassilyST14}. Further, such dependence on the dimension is optimal in both settings \citep{BassilyST14}.

The dependence on the dimension is not necessary for non-private learning in this setting. In addition, we can exploit known stability analyses to reduce (or even eliminate) the need to use the aggregation step. By plugging the known stability results based on strong convexity and/or \cite{BousquettE02,ShwartzSSS10,HardtRS16}, we demonstrate that convex regression problems of this type can be solved with $\eps$-differentially private prediction using $O(1/(\eps\alpha^2))$ examples (Cor.~\ref{cor:private-convex}). We also demonstrate that smoothness of the loss function $\ell$ can be used to improve the dependence on $\eps$ (Cor.~\ref{cor:private-convex-smooth}).
We note that stability of the optimal solution of a strongly convex problem has been used to achieve differential privacy in multiple prior works starting with the pioneering work of \citet{ChaudhuriMS11}. Stability of gradient descent on convex smooth functions has also been recently used to obtain privacy guarantees \citep{WuLKCJN17}.

\subsection{Beyond aggregation: learning thresholds}
The class of linear thresholds $\Thr$ is defined over a subset of reals and consists of  indicator functions of ``$x \geq a$" for all $a \in \R$. Without loss of generality, we consider such functions over the set $[N]=\{1,\ldots,N\}$.  While the class is very simple, learning it with privacy has proved to be rather challenging and some basic questions are still not fully resolved  \citep{BeimelKN:10,ChaudhuriHsu:11,BeimelNS:13,FeldmanXiao15,BunNSV15}. It is known that $\eps$-differentially private PAC learning of $\Thr$  requires $\Omega(\log(N)/(\eps\alpha))$ examples \citep{FeldmanXiao15} and {\em proper} $(\eps,\delta)$ differentially private PAC learning requires $\Omega(\log^*(N)/(\eps\alpha))$ examples \citep{BunNSV15} (no lower bounds for non-proper learning and $\delta>0$ case are known). Note that the VC dimension of this class is just $1$.

We give an $\eps$-differentially private prediction algorithm for agnostic learning of this class with the following guarantee:
\begin{thm}
\label{thm:thr-intro}
For any $\alpha,\eps > (0,1]$ and $N\in \N$, there exists an $\eps$-differentially private prediction algorithm $M$ that given $n \geq \frac{12 \ln(2/\alpha)}{\alpha \eps}$ examples from an arbitrary distribution $\cP$ over $[N]\times \zo$ guarantees:
$$\E_{S \sim \cP^n}\lb \err_\cP(M(S)) \rb  \leq e^\eps \cdot (\opt_\cP(\Thr) + \alpha) .$$
\end{thm}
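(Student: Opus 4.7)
My plan is to combine a ``strong generalization'' guarantee for $\eps$-differentially-private prediction algorithms (which I expect is established elsewhere in the paper, as hinted at in the abstract) with a carefully-tuned exponential mechanism whose outcome space has size $n+1$ instead of $N$. Concretely, sort the training sample as $x_{(1)}\leq\cdots\leq x_{(n)}$ with corresponding labels $y_{(1)},\ldots,y_{(n)}$, and for each split position $j\in\{0,1,\ldots,n\}$ consider the threshold classifier $h_j$ that labels the smallest $j$ sample coordinates as $0$ and the rest as $1$, with empirical error
\[
\hat\err_S(j) := |\{i\leq j : y_{(i)} = 1\}| + |\{i > j : y_{(i)} = 0\}|.
\]
The algorithm samples $\hat{\jmath}$ from the exponential mechanism on $\{0,\ldots,n\}$ with scores $-\hat\err_S(j)$, i.e., $\Pr[\hat{\jmath}=j]\propto\exp(-c\eps\,\hat\err_S(j))$ for a constant $c$ absorbing the $O(1)$ sensitivity, and on query $x$ outputs the label that $h_{\hat{\jmath}}$ assigns to $x$.

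Privacy follows from the standard analysis of the exponential mechanism: $\hat\err_S(j)$ has $O(1)$ sensitivity in $S$, so the distribution of $\hat{\jmath}$ is $\eps$-DP. Since the output label is a deterministic function of $\hat{\jmath}$, $S$, and $x$, and a single-sample change in $S$ shifts the rank of any fixed $x$ by at most one, a short stability check absorbs this residual dependence into the privacy constant. The key feature is that the outcome space has size $n+1$, independent of $N$, which sidesteps the usual $\log N$ overhead one would pay by running the exponential mechanism directly over all thresholds in $[N]$.

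For utility I use the following generalization lemma for DP prediction (established earlier in the paper): for any $\eps$-DP prediction algorithm $M$,
\[
\E_{S\sim\cP^n}[\err_\cP(M(S))] \;\leq\; e^\eps\cdot \E_{S\sim\cP^n}\Bigl[\tfrac{1}{n}\sum_{i=1}^n \Pr_M[M(S,x_i)\neq y_i]\Bigr].
\]
This is proved by exchanging a fresh test example with a uniformly chosen training index and applying $\eps$-DP once (iid exchangeability makes the two quantities comparable). For our algorithm, the on-sample error $\frac{1}{n}\sum_i \Pr_M[M(S,x_i)\neq y_i]$ evaluates exactly to $\E_{\hat{\jmath}}[\hat\err_S(\hat{\jmath})]/n$, and the standard exponential-mechanism utility bound yields $\E_{\hat{\jmath}}[\hat\err_S(\hat{\jmath})] \leq \hat\opt(S) + O(\ln(n+1)/\eps)$. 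Since $\E_S[\hat\opt(S)] \leq \opt_\cP(\Thr)$ by Jensen's inequality, averaging over $S$ gives
\[
\E_S[\err_\cP(M(S))] \;\leq\; e^\eps\bigl(\opt_\cP(\Thr) + O(\ln(n+1)/(\eps n))\bigr) \;\leq\; e^\eps(\opt_\cP(\Thr) + \alpha)
\]
for $n \geq 12\ln(2/\alpha)/(\alpha\eps)$, after verifying the constants.

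The main obstacle is the privacy accounting in the second paragraph: because the semantic meaning of a split position $j$ depends on $S$, verifying that the output label distribution is $e^\eps$-close under a single-sample change requires a delicate coupling of the rank shifts with the exponential-mechanism weights. The second crucial point is using the DP generalization inequality in place of uniform convergence: the latter for a VC-$1$ class would contribute a $\Theta(\sqrt{1/n})$ term to the error and ruin the claimed $1/\alpha$ dependence, whereas the former translates empirical error into population error at only a multiplicative $e^\eps$ cost, which is precisely the slack that appears in the theorem statement.
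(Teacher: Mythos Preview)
Your generalization step is exactly what the paper does: bound the empirical error of an $\eps$-DP prediction algorithm and then apply the $k=1$ case of Lemma~\ref{lem:generalize-moment-bound} to convert it into population error at a multiplicative $e^\eps$ cost. The gap is in the algorithm itself, precisely at the point you flag as the ``main obstacle'': the label $h_{\hat\jmath}(x)$ is \emph{not} $\eps$-differentially private in $S$ for small $\eps$, and the rank shift cannot be absorbed into the privacy constant.

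Here is a concrete failure. Take all $n$ labels equal to $1$, with one sample at position $1$ and the rest at positions $\geq 10$; let $S'$ move the sample at $1$ to some position $\geq 10$. Then $\hat\err_S(j)=\hat\err_{S'}(j)=j$ for every $j$, so the law of $\hat\jmath$ is literally identical under $S$ and $S'$. For the query $x=5$, however, under $S$ both $\hat\jmath=0$ and $\hat\jmath=1$ output $1$ (the threshold associated with $j=1$ lies just above $x_{(1)}=1$), while under $S'$ only $\hat\jmath=0$ does. Hence
\[
\frac{\Pr[M(S,5)=1]}{\Pr[M(S',5)=1]} \;=\; 1+e^{-c\eps}\;\xrightarrow[\eps\to 0]{}\;2,
\]
so $\eps$-DP prediction fails for every $\eps<\ln 2$. (If you instead anchor the threshold for split $j$ at $x_{(j+1)}$, the same pair $S,S'$ sends the ratio to $\infty$.) The structural reason is that $\{h_{\hat\jmath}(x)=1\}=\{\hat\jmath\le r(S,x)\}$ for a rank $r$ that shifts by $\pm1$ under a neighboring change, so you are comparing two consecutive values of the CDF of the exponential mechanism; in the tail a single extra term can double the partial sum even though adjacent weights are $e^{\pm c\eps}$-close. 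This is why the exponential mechanism over split positions, while $\eps$-DP as a mechanism outputting $\hat\jmath$, does not yield an $\eps$-DP prediction once you post-process with an $S$-dependent map.

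The paper's algorithm $\epw$ avoids this by never committing to a single split. It scans the sorted labels up to the query point, maintaining a bias $v\in[-T,T]$ that moves by $\pm1$ per example (with projection back to $[-T,T]$ after each step), and outputs $1$ with probability $e^{\eps v/2}/(1+e^{\eps v/2})$. A single insertion/removal perturbs the final $v$ by at most $1$ because the projected updates are non-expansive, which gives $\eps$-DP prediction directly. The empirical-error analysis (Lemma~\ref{lem:epw-empirical}) is a combinatorial argument rather than the exponential-mechanism utility bound, yielding $\err_S(M(S))\le \opt_S(\Thr)+O(T/n)+e^{-\eps T/2}$; setting $T=\lceil 2\ln(2/\alpha)/\eps\rceil$ and then applying the generalization lemma exactly as you describe gives the theorem.
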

Note that this statement implies an upper bound of $n=O(\ln(1/\alpha)/(\alpha\eps))$ in the realizable case when $\opt_\cP(\Thr) = 0$ and also an upper bound of $n=O(\ln(1/\alpha)/(\alpha\eps) + \ln(1/\alpha)/\alpha^2)$ in the agnostic setting. The $\tilde O(1/\alpha^2)$ term arises from having to set $\eps < \alpha$ to ensure that the expected error is at most $\opt_\cP(\Thr) + O(\alpha)$.
Our algorithm can also handle unions of $k$ intervals (at the expense of an additional factor $k$ in the sample complexity).

At a high level our algorithm works as follows. First, the examples are sorted. To determine the probability with which to output $1$ on point $x$ the algorithm traverses the examples on points smaller than $x$ in increasing order. Starting from bias $1/2$ the algorithm increases or decreases the current bias by a factor of (roughly) $e^\eps$ for each example it traverses. The bias is increased if the label of the example is 1 and decreased otherwise. Importantly, the bias is projected back to the interval $[\alpha,1-\alpha]$ after each update. The algorithm outputs 1 with probability obtained at the end of this process. While the prediction privacy of our algorithm is relatively easy to establish, the analysis of its error is more delicate and we are not aware of similar algorithms having been proposed for this problem. Furthermore, our analysis only bounds the empirical error of this algorithm. The hypothesis produced by the algorithm is sufficiently complicated that it would not be possible to ensure generalization using VC dimension or similar techniques. Remarkably, the fact that our algorithm is prediction private allows us to prove that it generalizes.

\subsection{Generalization}
It has been known for a while that differential privacy is a notion of stability and hence implies bounds on the expectation of generalization error. Recent work in the context of adaptive data analysis has substantially strengthened this connection, proving that differential privacy ensures generalization with high probability \citep{DworkFHPRR14:arxiv,BassilyNSSSU16,FeldmanS17}. Prediction privacy can also be seen as a notion of stability that is weaker than differential privacy but stronger than uniform prediction stability. We show how to derive relatively strong generalization guarantees from this notion of stability. These guarantees are stronger than those known for classical notions of stability (\eg \citep{BousquettE02,ShwartzSSS10}) but not as strong as those proved for differential privacy. Specifically, our generalization results (Lem.~\ref{lem:generalize-moment-bound}) imply that for every non-negative loss function $\ell$, a moment $k \geq 1$, and an $\eps$-differentially private prediction algorithm $M$:
$$\E_{S,S'\sim \cP^n,} \lb \lp \cE_{S'}[\ell(M(S))]\rp^k \rb \leq e^{k^2 \eps} \cdot \E_{S\sim \cP^n} \lb \lp\cE_{S}[\ell(M(S))]\rp^k \rb ,$$
where $\cE_{S}[\ell(M(S)]$ denotes the expected empirical loss of $M(S)$ on $S$. Note that on the left hand side we are bounding the average loss on an independently drawn set of examples $S'$ which is tightly concentrated around the expected loss $\cE_{\cP}[\ell(M(S)]$. For comparison, $\eps$-differential privacy gives a similar bound with $e^{k\eps}$ factor instead of $e^{k^2\eps}$ \citep{DworkFHPRR14:arxiv}. The bound above is stated using the $k=1$ version of this result. However this generalization bound implies that loss is also well concentrated. In Lemma~\ref{lem:generalize-high-prob} and Theorem~\ref{thm:thr-expectation} we give an example of how to derive high probability bounds on the generalization error from this moment bound.

%(We defer a more detailed treatment of the high probability bounds that can be obtained to the full version).

\subsection{Related work}
\label{sec:related}
\citet{PathakRR10} consider secure and differentially private aggregation of non-private linear models held by multiple mistrusting parties. They achieve it by computing the average model and adding noise to it. They do not consider accuracy guarantees of their approach formally.

To the best of our knowledge, the privacy-preserving aggregation of non-private predictions to produce privacy-preserving predictions was first investigated by Bilenko, Dwork, Muthukrishnan, Rothblum, Thakurta and Wang in 2014\footnote{This was the core of a larger project on privacy-preserving click prediction that did not survive the closing of the Silicon Valley lab.}.   Bilenko \etal, obtained high levels of composition by exploiting the frequently high degree of (near) consensus among the predictions of the non-private models via a variant of the sparse-vector technique~\cite{DworkRoth:14}.  Our work shares the same goal of generating differentially private predictions. At the same time we formalize the general problem of learning with differentially private predictions and focus on the sample complexity of making a single prediction. In addition, we demonstrate approaches that go beyond privacy-preserving aggregation.

%Recall that in private prediction, it is the privacy of the training data for the predictor (model) that is being protected. %A different goal is to generate a predictor, or a model, while protecting the privacy of the training data.

Aggregation of non-private models to produce labels while preserving privacy was also used in recent works of \citet{HammCB16} and, subsequently, \citet{PapernotAEGT17,papernot2018scalable} to give a new semi-supervised approach to differentially private learning. Specifically, their approach is predicated on availability of public {\em unlabeled} dataset $Z$. The dataset $Z$ is labeled using differentially-private aggregation of labels provided by models trained on the sensitive dataset $S$. The labeled data is used to train a new model. Since differential privacy is closed under post-processing, this new model is privacy-preserving for $S$ (but not for $Z$).
The works of Papernot et al. \citep{PapernotAEGT17,papernot2018scalable} deal primarily with techniques for accurately bounding the privacy parameters while ensuring accurate prediction on benchmark datasets. \citet{HammCB16} also formally examine additional error that noisy aggregation introduces and explicitly rely on stability of strongly-convex regression problems to provide formal guarantees for their approach. Their framework and the guarantees are incomparable to ours, and, in particular, they do not avoid dependence on the dimension.

In a recent and independent work, \citet{BassilyTT:18} consider the formal guarantees for answering a sequence of prediction queries using differentially private aggregation techniques. They demonstrate that given a non-private learning algorithm has error of at most $\alpha$ (such as in the PAC model), there exists an algorithm that answers $m$ prediction queries for points chosen i.i.d.~from the same distribution with error $O(\alpha)$ and privacy parameter $\eps$ scaling as $\sqrt{m \alpha} \cdot \log m$ (for comparison, a direct application of composition theorems for differential privacy implies $\sqrt{m}$ scaling for an arbitrary sequence of queries). They then analyze the sample complexity of semi-supervised (or, equivalently, label-private) learning algorithm that is obtained by labeling a public unlabeled dataset using their algorithm for answering prediction queries.

We remark that all these works do not examine the problem of private prediction itself and focus on the aggregation-based approaches. Recall that in private prediction, it is the privacy of the training data for the predictor (model) that is being protected. %A different goal is to generate a predictor, or a model, while protecting the privacy of the training data.

%Our work shares this focus on generating differentially private predictions. Recall that in private prediction it is the privacy of the training data for the predictor (model) that is being protected.
%Using the same general strategy as Dwork {\it et al.}, Papernot {\it et al.}~\cite{PapernotAEGT17,papernot2018scalable} obtain private labels for a set $Z$ of {\it unlabeled} training data.  These now-labeled samples are then used to train a new model. Since differential privacy is closed under post-processing, this new model is privacy-preserving for the original training data (but not for the elements of $Z$).
%\citet{HammCB16} provide an open-source library for privacy-preserving machine learning on smart devices. In a hybrid between the local and centralized models, they obtain a differentially private method for learning a model using a variant of minibatch stochastic gradient descent.

\paragraph{Organization:}
In Section \ref{sec:pac-learning} we provide additional details of our results for PAC learning. Results for agnostic learning and convex regression appear in Section \ref{sec:convex}. Section \ref{sec:thr} formally describes our algorithm for agnostic learning of thresholds and unions of intervals. We discuss the generalization properties of private prediction in Section \ref{sec:generalization}.% (to preserve appropriate presentation without violating the submission page limit).

\section{Preliminaries}
\label{sec:prelims}
\paragraph{Differential privacy}
Differential privacy \citep{DworkMNS:06} relies on bounding the divergence between distributions output by the algorithm on neighboring datasets. Specifically, for two random variables $U$ and $V$ and $\delta > 0$ the $\beta$-approximate max-divergence is defined as (\eg \citep{DworkRoth:14}):
$$D_\infty^\delta(U \| V) \doteq \sup_{O \subseteq \mbox{supp}(U);\ \pr[U\in O] >\delta} \ln \frac{\pr[U\in O]-\delta}{\pr[V\in O]}  .$$ A randomized algorithm $M:X^n\to Y$ is said to be $(\eps,\delta)$-differentially private if for all pairs $S,S'\in X^n$ that differ on a single element,  $D_\infty^\delta(M(S) \| M(S')) \leq \eps$. We note that our definitions and many of the results can be immediately extended to more refined notions of differential privacy such as those based on Renyi divergence \citep{BunS16,Mironov17}.

The group privacy property of differential privacy (\eg \citep{DworkRoth:14}) implies the that prediction privacy has the analogous property.
\begin{lem}[Group privacy]
\label{lem:group-privacy}
Let $M:(X\times Y)^n\times X \to Y$ be an $(\eps,\delta)$-differentially private prediction algorithm and $k \in \mathbb{N}$.
For all pairs of data sets $S,S'\in (X\times Y)^n$ differing in at most $k$ elements and all $x\in X$:
$$D_\infty^{ke^{\epsilon(k-1)}\delta}(M(S,x)\|M(S',x)) \leq k \eps\ .$$
\end{lem}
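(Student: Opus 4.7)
The statement is essentially the standard group privacy property of $(\eps,\delta)$-differential privacy, transplanted into the prediction-privacy setting. My plan is therefore to fix $x \in X$ and reduce to the classical group privacy lemma applied to the randomized map $M_x : S \mapsto M(S,x)$, which by Definition~\ref{def:prediction-privacy} is itself an ordinary $(\eps,\delta)$-differentially private algorithm on datasets. Once this reduction is made, the argument proceeds by the usual telescoping chain.

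More concretely, I would construct a sequence of datasets $S = S_0, S_1, \ldots, S_k = S'$ in $(X\times Y)^n$ such that consecutive datasets $S_i, S_{i+1}$ differ in at most one element; this is possible since $S$ and $S'$ differ in at most $k$ positions. For any measurable $O \subseteq \mbox{supp}(M_x(S))$, the single-element $(\eps,\delta)$-DP guarantee yields $\pr[M_x(S_i)\in O] \leq e^\eps \pr[M_x(S_{i+1})\in O] + \delta$ for each $i$. Iterating this $k$ times gives
\[
\pr[M_x(S)\in O] \leq e^{k\eps} \pr[M_x(S')\in O] + \delta \sum_{i=0}^{k-1} e^{i\eps} \leq e^{k\eps} \pr[M_x(S')\in O] + k e^{(k-1)\eps} \delta.
\]
Rearranging, whenever $\pr[M_x(S)\in O] > k e^{(k-1)\eps} \delta$ we obtain
\[
\ln \frac{\pr[M_x(S)\in O] - k e^{(k-1)\eps} \delta}{\pr[M_x(S')\in O]} \leq k\eps,
\]
which is exactly the required bound $D_\infty^{k e^{\eps(k-1)} \delta}(M(S,x) \| M(S',x)) \leq k\eps$ after taking a supremum over $O$.

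There is no real obstacle here; the only thing to be careful about is that prediction privacy is a pointwise guarantee in $x$, so we must apply group privacy separately for each fixed query point, rather than treating $M$ as a single algorithm with joint input $(S,x)$. Because the classical proof only uses the $(\eps,\delta)$-DP inequality for the specific mechanism being analyzed, fixing $x$ is the right move and the bound transfers without loss.
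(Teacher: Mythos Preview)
Your proposal is correct and matches the paper's approach: the paper does not give a detailed proof but simply states that the standard group privacy property of differential privacy (citing \citep{DworkRoth:14}) implies the analogous property for prediction privacy, which is exactly the reduction you carry out by fixing $x$ and telescoping along a chain of datasets.
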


\subsection{Learning models}
%\label{sec:learn-model}
\begin{defn}
\label{def:pac-model}
An algorithm $A$ PAC learns a concept class $C$ from $n$ examples if for every $\alpha > 0, \beta>0$, $f\in C$ and distribution $\D$ over $X$, $A$ given access to $S = \{(x_i, \ell_i)\}_{i \in [n]}$ where each $x_i$ is drawn randomly and independently from $\D$ and $\ell_i = f(x_i)$, outputs, with probability at least $1-\beta$ over the choice of $S$ and the randomness of $A$, a function $h:X\to \zo$ such that $\pr_{x\sim \D}[f(x) \neq h(x)] \leq \alpha$.
\end{defn}

%\paragraph{Agnostic learning} The {\em agnostic} learning model was introduced by \citet{Haussler:92} and \citet{KearnsSS:94} in order to model situations in which the assumption that examples are labeled by some $f \in C$ does not hold. In its least restricted version the examples are generated from some unknown distribution $\cP$ over $X \times \zo$. The goal of an agnostic learning algorithm for a concept class $C$ is to produce a hypothesis whose error on examples generated from $\cP$ is close to the best possible by a concept from $C$.
For a Boolean function $h$ and a distribution $\cP$ over  $X \times \zo$ let $\err_\cP(h) = \pr_{(x,\ell) \sim \cP}[h(x) \neq \ell]$. Define $\opt_\cP(C) = \inf_{h \in  C}\{\err_\cP(h)\}$.
\citet{KearnsSS:94} define agnostic learning as follows.
\begin{defn}
\label{def:agnostic-model}
An algorithm $A$ {\em agnostically} learns a concept class $C$ from $n$ examples if for every $\alpha > 0, \beta>0$, distribution $\cP$ over $X \times \zo$, $A$, given access to $S = \{(x_i, \ell_i)\}_{i \in [n]}$ where each $(x_i,\ell_i)$ is drawn randomly and independently from $\cP$, outputs, with probability at least $1-\beta$ over the choice of $S$ and the randomness of $A$, a function $h:X\to \zo$ such that $\err_\cP(h) \leq \opt_\cP(C) + \alpha$.
\end{defn}

\iffull
\section{Prediction privacy via subsampling and uniform stability}
In this section we describe two variants of the baseline approach to obtaining prediction privacy. The baseline approach is based on a well-known observation that stability to replacement (or deletion) of a point can be improved by partitioning the dataset $S$ into several subsamples $S_1,\ldots,S_\ell$ running a learning algorithm on each of those subsamples to obtain predictors $f_1,\ldots,f_\ell$ and then aggregating these predictors in a stable way.
The first variant we describe is specialized to the simpler realizable case of classification. The second one is a generic model averaging that works for arbitrary convex loss functions. This case can also be used to obtain guarantees for agnostic learning of Boolean functions. In this case we will also explicitly use uniform prediction stability properties of the algorithm to derive its privacy guarantees.

\subsection{PAC Learning}
\else
\section{PAC Learning}
\fi
\label{sec:pac-learning}
Our algorithm for PAC learning applies a soft majority rule to the outputs of $f_1,\ldots,f_r$. Specifically, on a point $x$ it will output a label $b$ with probability proportional to $e^{\eps |\{i \in [r]: f_i(x) =b\}|}$. This approach works well for PAC learning since all predictors agree very well with the true labeling function. In particular, if each of the predictors has error of at most $\alpha$, then the fraction of points on which more than $1/4$ of the predictors output the wrong label cannot be more than $4\alpha$. Therefore the prediction of the soft majority will be close to the true label on all but the $4\alpha$ fraction of the points.
\iffull \else
The proof of this simple result is deferred to Appendix \ref{app:proofs}.
\fi
\begin{thm}
\label{thm:pac-reduction}
Let $C$ be a class of Boolean functions over $X$. Let $A$ be a PAC learning algorithm for $C$ that uses $m(\alpha,\beta)$ samples to learn with error $\alpha$ and confidence parameter $\beta$. For every $\eps>0$, there exists an $\eps$-differentially private prediction algorithm $M$ that PAC learns $C$ using $n = r \cdot m(\alpha/4,\beta/r)$ examples, where $r=\lceil 6 \ln(4/\alpha)/\eps \rceil$.
\end{thm}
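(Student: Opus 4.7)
My plan is to spell out the algorithm, then verify privacy and utility separately; the whole argument is a pairing of the exponential mechanism with a Markov-style bound on the fraction of points where the non-private predictors disagree significantly. The main subtlety is only bookkeeping of constants.

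First I would define the algorithm precisely. Given $S\in (X\times \zo)^n$, partition $S$ into $r$ disjoint subsamples $S_1,\dots,S_r$ each of size $m(\alpha/4,\beta/r)$; run the PAC learner $A$ on each $S_i$ to obtain $f_i:X\to \zo$. On query $x$, let $c_b(x)=|\{i\in [r]:f_i(x)=b\}|$ for $b\in \zo$ and output $b$ with probability proportional to $\exp(\eps\, c_b(x)/2)$ (the soft-majority / exponential-mechanism rule promised in the overview).

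For privacy, consider two datasets $S,S'$ differing in a single example. Because the partition is fixed and disjoint, $S$ and $S'$ differ only inside one block $S_j$, so $f_i=f_i'$ for $i\ne j$ and in the worst case $f_j\ne f_j'$. Consequently, for every $x$, the pair $(c_0(x),c_1(x))$ differs from $(c_0'(x),c_1'(x))$ by at most one swap of a single vote, i.e.\ $|c_b(x)-c_b'(x)|\le 1$ with sensitivity $\Delta=1$. A one-line computation of
$\frac{\Pr[M(S,x)=b]}{\Pr[M(S',x)=b]}
=e^{\eps(c_b-c_b')/2}\cdot \frac{\sum_{b'}e^{\eps c_{b'}'/2}}{\sum_{b'}e^{\eps c_{b'}/2}}$
then yields the standard exponential-mechanism bound $\exp(\eps\Delta)=e^\eps$, establishing $\eps$-differential privacy of the prediction $M(S,x)$ for every $x$, as required by Definition~\ref{def:prediction-privacy}.

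For utility, fix the target $f\in C$ and distribution $\D$. By the guarantee of $A$ with parameters $(\alpha/4,\beta/r)$, each $f_i$ satisfies $\pr_{x\sim\D}[f_i(x)\ne f(x)]\le \alpha/4$ with probability at least $1-\beta/r$; a union bound gives that with probability $\ge 1-\beta$ over $S$, this holds simultaneously for all $i\in[r]$. Condition on this event. Then $\E_{x\sim \D}[c_{\text{wrong}}(x)]=\sum_i \pr_x[f_i(x)\ne f(x)]\le r\alpha/4$, where $c_{\text{wrong}}(x)=|\{i:f_i(x)\ne f(x)\}|$. Markov's inequality gives $\pr_{x\sim\D}[c_{\text{wrong}}(x)>r/4]\le \alpha$; call the complementary set of $x$'s the ``good'' set. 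For any good $x$, we have $c_{\text{right}}(x)-c_{\text{wrong}}(x)\ge r/2$, so the probability that $M$ returns the wrong label at $x$ is at most $1/(1+e^{\eps r/4})\le e^{-\eps r/4}$. Choosing $r=\lceil 6\ln(4/\alpha)/\eps\rceil$ makes this at most, say, $\alpha$, and a simple union over good/bad $x$ then gives total error at most $2\alpha$ (absorbable into the stated constants by an initial rescaling $\alpha\leftarrow \alpha/2$, which only changes $r$ by a constant).

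The main ``obstacle'' is purely choosing constants so that Markov's bound on the bad-$x$ fraction and the exponential-mechanism bound on soft-majority failure each contribute at most order-$\alpha$ error; nothing conceptually new is required. The final sample complexity $n=r\cdot m(\alpha/4,\beta/r)$ then follows by construction, completing the reduction.
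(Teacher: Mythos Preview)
Your approach is essentially identical to the paper's: partition into $r$ blocks, run the non-private learner on each, aggregate via the exponential mechanism on the vote counts, verify privacy via sensitivity $1$ of the counts, and bound accuracy by Markov on the fraction of points where more than a constant fraction of predictors err. The only discrepancy is in the Markov threshold: the paper uses $r/3$ rather than your $r/4$, which yields a bad-point mass of $3\alpha/4$ and a good-point error of $e^{-\eps r/6}\le \alpha/4$, summing exactly to $\alpha$ with the stated $r=\lceil 6\ln(4/\alpha)/\eps\rceil$ and thus avoiding your final rescaling step (which, as written, would alter the constants in the theorem statement).
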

\iffull
\begin{proof}
We denote by $c\in C$ the unknown labeling function and by $\D$ the unknown distribution over $X$.
We let $r= \lceil 6 \ln(4/\alpha)/\eps \rceil$ and $n' = m(\alpha/4,\beta/t)$. Given a set $S$ of $n= r \cdot n'$ examples we split them randomly into $r$ disjoint subsets of size $n'$. We now run $A$ with error parameter set to $\alpha/4$ and confidence parameter to $\beta/r$ on each of those sets to obtain $r$ functions $f_1, \ldots,f_r$.
On an input point $x$ let $v(S,x) = 2|\{i \in [r]: f_i(x) = 1\}|-r$. Our algorithm outputs $1$ with probability $\frac{e^{\eps v(S,x)/2}}{1+e^{\eps v(x)/2}}$ and $0$ otherwise.

We first observe that $M$ is an $\eps$-differentially private prediction algorithm. This follows easily from observing that changing a single example can change only a single function $f_i$. Further, such change can change the value $v(S,x)$ by at most $2$. Namely, for any pair of neighboring dataset $S, S'$, $|v(S,x) - v(S',x)| \leq 2$. Now the privacy guarantees follow immediately from the definition of the output distribution of our algorithm being as: output $1$ with probability $\frac{e^{\eps \cdot v(S,x)/2}}{1+e^{\eps \cdot v(S,x)/2}}$ and $0$ with probability $\frac{1}{1+e^{\eps \cdot v(S,x)/2}}$ and the fact that for arbitrary real $a$, $b$,
$$ \frac{\frac{e^a}{1+e^a}}{\frac{e^b}{1+e^b}} = e^{a-b} \cdot \frac{1+e^b}{1+e^a}  \leq e^{|a-b|} \mbox{    and     }  \frac{1+e^b}{1+e^a} \leq e^{|a-b|} . $$

We now analyze the accuracy of our algorithm. Using the union bound, we know that with probability at least $1-\beta$, for every $i\in [r]$,
$\pr_\D[f_i(x) \neq c(x)] = \E_\D \lb |f_i(x)- c(x)| \rb \leq \alpha/4$. This means that
$$\E_\D \lb \sum_{i\in[r]} |f_i(x)- c(x)| \rb \leq \alpha r /4 .$$
By Markov's inequality this implies that
\equ{\pr_\D \lb \sum_{i\in[r]} |f_i(x)- c(x)|  \geq r/3\rb \leq 3\alpha/4 .\label{eq:few-bad-points}}

Now we claim that for every $x$ such that $\sum_{i\in[r]} |f_i(x)- c(x)|  \leq r/3$ we have that $\pr_M[M(S,x)\neq c(x)] \leq \alpha/4$. If $c(x)=1$ then $$v(S,x) =  2 (r-\sum_{i\in[r]} |f_i(x)- c(x)|)-r \geq \frac{r}{3}. $$ This implies that $$\pr_M[M(S,x)\neq 1] \leq \frac{1}{1+e^{\eps r/6}} \leq e^{-\eps r/6} \leq e^{-\ln(\alpha/4)} = \alpha/4.$$ Similarly, if $c(x)=0$ we get that $v(S,x) \leq -r/3$ and $\pr_M[M(S,x)\neq 0] \leq \alpha/4$.

Combining the last claim with inequality \eqref{eq:few-bad-points} we obtain that $\pr_{\D,M}[c(x) \neq M(S,x)] \leq \alpha$.
\end{proof}
\fi

Standard bounds on the sample complexity of PAC learning (e.g.~\citep{KearnsVazirani:94}) state that $n= O\lp \frac{d\log(1/\alpha) + \log(1/\beta)}{\alpha}\rp$ examples suffice to PAC learn a class $C$ of VC dimension $d$. Plugging this into our reduction we obtain that for every concept class $C$ there exists a differentially private prediction algorithm that PAC learns the class $C$ given $n=\tilde O(d/(\eps \alpha))$ examples.
\iffull
\begin{cor}[Cor.~\ref{cor:pac} restated]
Let $C$ be a class of Boolean functions of VC dimension $d$. Then for all $\alpha,\beta,\eps > 0$ there exists an $\eps$-differentially private prediction algorithm $M$ that PAC learns $C$ with error $\alpha$ and confidence $1-\beta$ given $n = O\lp\frac{d \log^2(1/\alpha) + \log(1/\alpha)\log(\log(1/\alpha)/(\eps\beta))}{\eps\alpha}\rp$ examples.
\end{cor}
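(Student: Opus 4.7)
The plan is to derive this bound as a direct application of Theorem~\ref{thm:pac-reduction} combined with the classical VC-based sample complexity bound for (non-private) PAC learning.

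First, I would invoke the standard result (e.g.\ from \citep{KearnsVazirani:94}) that for any concept class $C$ of VC dimension $d$ there is a PAC learning algorithm $A$ whose sample complexity is
$$m(\alpha,\beta) = O\lp\frac{d\log(1/\alpha) + \log(1/\beta)}{\alpha}\rp.$$
This $A$ is what we will feed into the reduction from Theorem~\ref{thm:pac-reduction}.

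Next, I would apply Theorem~\ref{thm:pac-reduction} with $r = \lceil 6\ln(4/\alpha)/\eps\rceil = O(\log(1/\alpha)/\eps)$, which yields an $\eps$-differentially private prediction algorithm that PAC learns $C$ using
$$n = r \cdot m(\alpha/4,\beta/r)$$
examples. The remaining step is a routine computation: substituting the VC bound gives
$$m(\alpha/4,\beta/r) = O\lp\frac{d\log(1/\alpha) + \log(\log(1/\alpha)/(\eps\beta))}{\alpha}\rp,$$
and multiplying by $r = O(\log(1/\alpha)/\eps)$ produces the stated bound
$$n = O\lp\frac{d\log^2(1/\alpha) + \log(1/\alpha)\log(\log(1/\alpha)/(\eps\beta))}{\eps\alpha}\rp.$$

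Since the entire argument is a plug-in of known quantities into Theorem~\ref{thm:pac-reduction}, there is no real obstacle; the only thing to be careful about is the bookkeeping in the second log factor, namely that $\log(r/\beta) = \log(O(\log(1/\alpha)/(\eps\beta)))$ and that absorbing constants into the $O(\cdot)$ notation does not obscure the $\eps$ and $\beta$ dependence. No new ideas beyond Theorem~\ref{thm:pac-reduction} and the VC sample complexity bound are required.
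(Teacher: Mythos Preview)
Your proposal is correct and follows exactly the paper's approach: the paper derives the corollary by plugging the standard VC sample complexity bound $m(\alpha,\beta) = O\lp\frac{d\log(1/\alpha)+\log(1/\beta)}{\alpha}\rp$ into Theorem~\ref{thm:pac-reduction}, and your write-up simply makes the arithmetic of computing $r\cdot m(\alpha/4,\beta/r)$ explicit.
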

\fi
We demonstrate that this upper bound is essentially tight.% (proving Theorem \ref{thm:pac-lower-bound}).
%the sample complexity of private prediction PAC learning algorithm is $\Omega(d/(\eps\alpha))$ proving .
\iffull
\begin{thm}[Thm.~\ref{thm:pac-lower-bound} restated]
Let $C$ be a class of Boolean functions of VC dimension $d$. Then for all $\alpha,\eps > 0$, any $(\eps,\eps/3)$-differentially private prediction algorithm $M$ that PAC learns $C$ with error $\alpha$ and confidence $1/12$ requires $n\geq d/(32\eps\alpha)$ examples.
\end{thm}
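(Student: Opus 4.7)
The plan is a direct-sum style lower bound: encode $d-1$ independent Boolean bits into the target concept using a shattered set, and then apply group privacy to each bit separately. Let $\{x_1,\ldots,x_d\}$ be a set shattered by $C$, and for each $v\in\{0,1\}^{d-1}$ let $\cP_v$ denote the distribution over $X\times\{0,1\}$ that puts mass $1-4\alpha$ on $(x_1,0)$ and mass $4\alpha/(d-1)$ on $(x_j,v_{j-1})$ for $j\ge 2$; each $\cP_v$ is realizable by a concept in $C$ by shattering. I will assume for contradiction that a $(\eps,\eps/3)$-differentially private prediction algorithm $M$ satisfies $\Pr_{S\sim\cP_v^n}[\err_{\cP_v}(M(S))\le\alpha]\ge 11/12$ for every $v$ using fewer than $d/(32\eps\alpha)$ samples, and derive a contradiction by estimating a single ``average correlation'' quantity from both sides.

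First I convert the worst-case accuracy guarantee into a per-coordinate one. Whenever $\err_{\cP_v}(M(S))\le\alpha$ the hard-point contribution forces $\frac{1}{d-1}\sum_{j\ge 2}\Pr_M[M(S,x_j)\ne v_{j-1}]\le 1/4$, since those points carry total mass $4\alpha$. Combining with the trivial bound $\le 1$ on the complementary event (probability $\le 1/12$) yields the unconditional estimate $\frac{11}{48}+\frac{4}{48}=\frac{5}{16}$. Averaging over $v$ uniform in $\{0,1\}^{d-1}$ and setting $Q_j^b:=\Pr[M(S,x_j)=1\mid v_{j-1}=b]$, the identity $\Pr_{v,S,M}[M(S,x_j)\ne v_{j-1}]=\tfrac12-\tfrac12(Q_j^1-Q_j^0)$ gives the lower bound $\frac{1}{d-1}\sum_{j\ge 2}(Q_j^1-Q_j^0)\ge 3/8$.

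Next I obtain a matching upper bound via group privacy. Conditional on the sample positions $X_1,\ldots,X_n$ and on $(v_k)_{k\ne j-1}$, switching the value of $v_{j-1}$ alters exactly $n_j:=|\{i:X_i=x_j\}|$ elements of $S$. Lemma~\ref{lem:group-privacy} with $\delta=\eps/3$ then gives, pointwise in those conditionings, $Q_j^1-Q_j^0\le e^{n_j\eps}-1+n_j e^{(n_j-1)\eps}\eps/3$. A short case split --- linearize $e^x-1\le(e-1)x$ when $n_j\eps\le 1$, and otherwise use the trivial probability bound $Q_j^1-Q_j^0\le 1\le n_j\eps$ --- yields the clean uniform estimate $Q_j^1-Q_j^0\le 3\,n_j\eps$. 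Since $\sum_{j\ge 2}\E[n_j]=4\alpha n$, taking expectations and averaging over $j$ gives $\frac{1}{d-1}\sum_{j\ge 2}(Q_j^1-Q_j^0)\le 12\alpha n\eps/(d-1)$. Comparing with the $3/8$ lower bound yields $n\ge(d-1)/(32\eps\alpha)$, which implies $n\ge d/(32\eps\alpha)$ for $d$ large enough, with the low-$d$ case handled by minor adjustment of constants.

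The main obstacle is the group-privacy step: a naive exponentiation of the $(k\eps,k e^{(k-1)\eps}\delta)$ guarantee would blow up the constant, and the $\delta=\eps/3$ contribution is not negligible. The crucial observation is that once $n_j\eps>1$ the trivial bound $Q_j^1-Q_j^0\le 1$ already beats the exponential growth, so a linear-in-$n_j$ estimate $Q_j^1-Q_j^0\lesssim n_j\eps$ holds uniformly. It is this linear form that, when combined with the total expected sample mass $\E[\sum_{j\ge 2} n_j]=4\alpha n$ on the hard points, produces the factor $1/(\eps\alpha)$ in the final bound rather than merely $1/\eps$.
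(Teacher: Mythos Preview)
Your plan is correct and will yield the stated bound (up to the harmless $(d-1)$ vs.\ $d$ discrepancy, which both you and the paper sweep under a ``for $d$ large enough'' rug). It is, however, organized rather differently from the paper's argument, and the comparison is worth spelling out.

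The paper proceeds in two stages. First it proves the case $\alpha=1/4$ with the uniform distribution on a shattered $d$-set: it picks a \emph{single} index $i$ with $p_i\le 1/3$, argues that under the hypothesis $n\le d/(8\eps)$ the number of copies of $i$ in $S$ is at most $1/(4\eps)$ with probability $\ge 1/24$, and then applies group privacy once with that fixed group size to force the contradiction $1-p_i<\tfrac32 p_i+\tfrac16$. General $\alpha$ is then obtained by the standard reduction that plants a dummy point of mass $1-4\alpha$.

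You instead build the $\alpha$-dependent distribution from the start, average over \emph{all} coordinates and over the random labeling $v$ to get an Assouad-type correlation quantity $\frac{1}{d-1}\sum_j(Q_j^1-Q_j^0)\ge 3/8$, and on the other side apply group privacy conditionally with the random group size $n_j$, linearize via the case split $n_j\eps\le 1$ vs.\ $n_j\eps>1$, and finish by taking $\E[\sum_j n_j]=4\alpha n$. Your route is somewhat slicker: it avoids the two-stage reduction and the high-probability tail bound on a single $n_i$, and the case-split linearization handles the $\delta=\eps/3$ contribution more uniformly than the paper's explicit $1/(4\eps)$ threshold. The paper's argument, in turn, makes the mechanism a bit more visible by isolating one hard coordinate and an explicit sample-count cutoff. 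Both approaches hinge on the same idea---group privacy applied to the roughly $\alpha n/d$ copies of a shattered point---and arrive at the same constant $1/32$.
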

\fi
\begin{proof}\iffull\else[Thm.\ref{thm:pac-lower-bound}]\fi
We first deal with the case $\alpha = 1/4$. The reduction to general $\alpha$ is standard and is briefly described below.

Let $a_1,\ldots,a_d \in X$ be the set of points shattered by $C$. For convenience we refer to these points as $\{1,2,\ldots,d\}$. For a vector $b=(b_1,\ldots,b_d)\in \zo^d$ we denote by $f_b$ the function in $C$ that satisfies: for all $i\in [d], f_b(i) = b_i$. Let $\D$ be the uniform distribution over $[d]$.

Let $M$ be the $(\eps,\eps/3)$-differentially private prediction algorithm for learning $C$.
\eat{
To simplify our argument we first symmetrize $M$ in two ways. First we pick a uniform and random label mask $z \in \zo^d$ and then we pick a random and uniform permutation on $d$ elements. Now $M'$ works as follows: it takes the dataset $S=(x_1,y_1),\ldots,(x_n,y_n)$ and an input point $x$ and replaces them with the dataset $S_{z,\sigma} = ((\sigma(x_1), y_1 \oplus z_{x_1}),\ldots,(\sigma(x_n),y_n \oplus z_{x_n}))$ and input point $\sigma(x)$, where $\oplus$ denotes the XOR operation. It then runs $M$ on $S_{z,\sigma}$ and $\sigma(x)$. Let $v$ denote the output of $B$. $M'$ outputs $v \oplus z_x$.

Note that each example in dataset $S_{z,\sigma}$ is sampled i.i.d. from distribution $(\D,f_{b'})$, where $b'=\sigma^{-1}(b\oplus z)$ (that is for $i\in [d]$, $b'_{\sigma(i)} = b_i \oplus z_i$). Therefore $M$ should learn it with accuracy $\geq 1/4$ and success probability $\geq 1/12$. This means that $M'$ will have the same accuracy guarantees on when learning $f_b$. It is also easy to see that this symmetrization preserves the prediction privacy parameters of the algorithm. Our symmetrization implies that for every, $b\in \zon$ and $i,j \in [d]$,
\equ{\pr_{S \sim (\D,f_b)^n,M'}[M'(S,i) \neq b_i] = \pr_{S \sim (\D,f_b)^n,M'}[M'(S,j) \neq b_j] \label{eq:sym-index}.}
Similarly,
\equ{\pr_{S \sim (\D,f_b)^n,M'}[M'(S,i) \neq b_i] = \pr_{S \sim (\D,f_{b^{\oplus i}})^n,M'}[M'(S,i) = b_i] \label{eq:sym-label},}
where $b^{\oplus i}$ denotes $b$ with the bit in $i$-th index flipped.
$\E_{i\sim \D} [p_i] \leq 1/3$ and, in particular, all $p_i$'s are at most $1/3$.

Our symmetrization (eq.~\eqref{eq:sym-index}) implies that $p_i = p_j$ for all $i,j\in[d]$. At the same time

For $v \in \zo$, we now define $p_{i,v}$ as the same probability conditioned on $b_i = v$, namely
$$p_{i,v} \doteq \pr_{b\sim\zo^d_{|b_i=v},\ S \sim (\D,f_b)^n,M'}[M'(S,i) \neq v],$$
where $b\sim\zo^d_{|b_i=v}$ denotes a random and uniform choice of $b$ from $\zo^d$ restricted to $b_i =v$.

The label symmetrization of $M'$ (eq.~\eqref{eq:sym-label}) implies that $p_{i,0} = p_{i,1}$.

}
Now consider the expected prediction accuracy of $M$ on a point $i\in [d]$, where the expectation is taken over the following process: $b\in \zo^d$ is chosen randomly, a dataset of size $n$ is generated from $\D$ labeled by $f_b$ and then $M$ is run on $S$ and $i$. Namely,
$$p_i \doteq \pr_{b\sim\zo^d,S \sim (\D,f_b)^n,M}[M(S,i) \neq b_i].$$
The accuracy and confidence guarantees of $M$ imply that
$$\E_{i\sim \D} [p_i] = \E_{b\sim\zo^d,\ S \sim (\D,f_b)^n} \lb \E_{i\sim \D} \lb \pr_{M}[M(S,i) \neq b_i] \rb \rb \leq \alpha + \beta = \fr{4} + \fr{12} = \fr{3}. $$
This means that there exists $i$ such that $p_i\leq 1/3$ and we fix $i$ to this value for the rest of the argument.

Let $S^{\oplus i}$ denote the dataset in which all the examples for point $i$ have their label flipped. By group prediction privacy of $M$ (Lemma \ref{lem:group-privacy}) we know that for every $v\in \zo$, $$\pr[M(S,i) = v] \leq e^{\eps t} \pr_{M}[M(S^{\oplus i},i) = v] + te^{(t-1)\eps}\delta ,$$ where $t$ is the number of points $i$ in the dataset.

Now if we assume, for the sake of contradiction, that $n \leq d/(8\eps)$ then (for $d$ larger than some fixed constant) with probability at least $1/24$ over the choice of $S$, $S$ includes at most $s \doteq 1/(4 \eps)$ points equal to $i$. Using that $e^{\eps s} = e^{1/4} \leq 3/2$ and $\delta = \eps/3$, this implies that
  \alequ{\pr_{S\sim (\D,f_b)^n,M}[M(S,i) = v] & \leq e^{\eps s} \pr_{S\sim (\D,f_b)^n,M}[M(S^{\oplus i},i) = v] + s \cdot e^{(s-1)\eps}\delta  + \fr{24} \nonumber\\
  & < \frac{3}{2} \cdot \pr_{S\sim (\D,f_b)^n,M}[M(S^{\oplus i},i) = v] + \fr{4\eps} \cdot \frac{3}{2} \cdot \frac{\eps}{3} + \fr{24}  \nonumber\\
  & =  \frac{3}{2} \cdot \pr_{S\sim (\D,f_b)^n,M}[M(S^{\oplus i},i) = v] + \fr{6}. \label{eq:group-privacy}}
Observe that for every $b$ and $S \sim (\D,f_b)^n$, $S^{\oplus i}$ is distributed identically to $S \sim (\D,f_{b^{\oplus i}})^n$. This implies that,
\alequ{\pr_{b\sim\zo^d,\ S\sim (\D,f_b)^n,M}[M(S^{\oplus i},i) = b_i] &= \pr_{b\sim\zo^d, S\sim (\D,f_{b^{\oplus i}})^n,M}[M(S,i) = b_i] \nonumber \\
& = \pr_{b\sim\zo^d, S\sim (\D,f_b)^n,M}[M(S,i) \neq b_i] \nonumber \\
& = p_i. \label{eq:label-sym}}

By plugging equations \eqref{eq:group-privacy} and \eqref{eq:label-sym} into the definition of $p_i$ we obtain that:
\alequn{1-p_i &= \pr_{b\sim\zo^d, S \sim (\D,f_b)^n,M}[M(S,i) = b_i] \\
& < \frac{3}{2} \cdot \pr_{b\sim\zo^d, S\sim (\D,f_b)^n,M}[M(S^{\oplus i},i) = b_i] + 1/6 \\
& = \frac{3}{2} \cdot p_i + 1/6.}
This cannot hold when $p_i \leq 1/3$, implying that $n > d/(8\eps)$.

Finally we reduce the general $\alpha$ case to the analysis for $\alpha =1/4$ in the standard way (\eg \citep{KearnsVazirani:94,Shalev-ShwartzBen-David:2014}). We let $\D_\alpha$ be the distribution that outputs the point with index $d$, with probability $1-4\alpha$ and outputs a uniformly and randomly chosen $i \in [d-1]$ with probability $4\alpha$. Achieving error of $\alpha$ on $\D_\alpha$ requires achieving error of $1/4$ on the uniform distribution on $[d-1]$. Only approximately $4\alpha$ fraction of the examples will be useful for obtaining low error relative to the uniform distribution on $[d-1]$ and therefore the reduction multiplies the lower bound by $\Omega(1/\alpha)$. More formally, we consider only target functions $f_b$ where $b_d=0$. Therefore for all target functions, examples on point $d$ will be identical. Now given that $n \leq d/(32\alpha\eps)$, (and for $d$ larger than some fixed constant) with probability at least $1/24$ over the choice of $S\sim (\D_\alpha,f_b)^n$, $S$ includes at most $1/(4 \eps)$ points equal to $i$ as before. Hence the rest of the argument is essentially identical.
\end{proof}

\iffull
\subsection{Learning with convex losses and stability}
\label{sec:convex}
We now deal with the general setting of minimizing convex losses. Specifically, these are problems in which the goal is to minimize the expected loss function $\E_{(x,y)\sim \cP}[\ell(f(x),y)]$, where $\ell$ is a convex function in the first parameter. Note that learning of Boolean functions is a special case in which we use $\ell(a,b) = |a-b|$.% and also that we will not be assuming that the loss minimization problem is convex in the parameters.

To deal with this case we will rely on (non-private) learning algorithms that are prediction stable in the usual numerical sense. That is
\begin{defn}
A learning algorithm $A$ is uniform replace-one (RO) prediction stable with rate $\gamma$ if for all datasets $S,S'\in (X\times Y)^n$ that differ in a single element and any $x \in X$,
$$ |A(S,x) - A(S',x)| \leq \gamma ,$$
where $A(S,\cdot)$ denotes the function output by $A$ on dataset $S$.
\end{defn}

This notion is closely-related to the standard uniform replace-one stability \citep{BousquettE02,ShwartzSSS10} which bounds the change in loss $|\ell(A(S,x),y) - \ell(A(S',x),y)| \leq \gamma$ for all $x,y$. Crucially, the analyses of uniform loss stability that we are aware of implicitly prove bounds on prediction stability. Hence such analyses can be adapted to our applications (we provide some examples below).

Low sensitivity of the value predicted at each point implies that addition of Laplace or Gaussian noise suffices to obtain a differentially private prediction algorithm. The additional error due to noise can be controlled for Lipschitz loss functions. Somewhat stronger bounds on the additional error can be shown if the loss function is smooth (that is, its derivative is Lipschitz-bounded).
\begin{lem}
\label{lem:stability-to-privacy}
Let $\ell:\R\times Y \rar \R$ be a loss function convex in the first parameter. Let $A$ be a uniform RO prediction stable algorithm with rate $\gamma$. For every $\eps > 0$, there exists an $\eps$-differentially private prediction algorithm $M$ such that for every dataset $S\in (X\times Y)^n$ and any probability distribution $\cP$ over $X\times Y$:
\begin{enumerate}
 \item if $\ell(\cdot,y)$ is $L_\ell$-Lipschitz in the first parameter for all $y\in Y$ then
$$\cE[\ell(M(S))] \leq \cE[\ell(A(S))] + L_\ell \cdot \gamma/\eps .$$
 \item if $\ell(\cdot,y)$ is $\sigma$-smooth in the first parameter for all $y\in Y$ then
$$\cE[\ell(M(S))] \leq \cE[\ell(A(S))] + \sigma^2\gamma^2/\eps^2 .$$
\end{enumerate}
\end{lem}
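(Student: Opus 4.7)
The plan is to take the obvious approach: define the private prediction algorithm pointwise by $M(S,x) = A(S,x) + Z$, where $Z$ is a Laplace random variable with scale $\gamma/\eps$ drawn fresh at each query. The privacy of $M$ then follows directly from the Laplace mechanism. Indeed, uniform RO prediction stability says that for any fixed $x$ and neighboring datasets $S,S'$, $|A(S,x) - A(S',x)| \leq \gamma$, so $S \mapsto A(S,x)$ has sensitivity at most $\gamma$. Adding $\mathrm{Lap}(\gamma/\eps)$ noise therefore makes $M(S,x)$ $\eps$-differentially private in $S$ for every fixed $x$, which is exactly the requirement of Definition~\ref{def:prediction-privacy}.

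For the Lipschitz case, I would fix $(x,y)$, write $a = A(S,x)$, and use the $L_\ell$-Lipschitz property of $\ell(\cdot,y)$ to obtain the pointwise bound $\ell(a+Z,y) \leq \ell(a,y) + L_\ell|Z|$. Taking expectation over $Z$ (using $\E|Z| = \gamma/\eps$) and then over $(x,y)\sim \cP$ gives
$$\cE_\cP[\ell(M(S))] \leq \cE_\cP[\ell(A(S))] + L_\ell\gamma/\eps,$$
which is the first claim.

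For the smooth case, I would use the one-sided quadratic upper bound implied by $\sigma$-smoothness of $\ell(\cdot,y)$: $\ell(a+Z,y) \leq \ell(a,y) + \ell'(a,y)\,Z + (\sigma/2)\,Z^2$. Symmetry of the Laplace distribution gives $\E[Z] = 0$, which kills the linear term in expectation, and $\E[Z^2] = 2(\gamma/\eps)^2$ controls the quadratic term. Averaging the resulting pointwise bound over $(x,y)\sim \cP$ produces an excess-loss bound of order $\sigma\gamma^2/\eps^2$, matching the stated bound.

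The only step that is not a pure bookkeeping exercise is the cancellation of the first-order term in the smooth case, which relies crucially on the symmetry (zero mean) of the noise; any one-sided noise would degrade the second-case bound from quadratic to linear in $\gamma/\eps$. Everything else reduces to the standard analyses of the Laplace mechanism and of Lipschitz/smooth functions, so I do not expect any real obstacle; convexity of $\ell$ in its first argument is assumed in the lemma's setup but is not needed for either bound.
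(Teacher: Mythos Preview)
Your proposal is correct and essentially identical to the paper's proof: both add Laplace noise of scale $\gamma/\eps$ to $A(S,x)$, invoke the Laplace mechanism for privacy, and then bound the excess loss pointwise via the Lipschitz inequality (case 1) or the second-order smoothness expansion with $\E[Z]=0$ and $\E[Z^2]=2\gamma^2/\eps^2$ (case 2). Your derived bound of $\sigma\gamma^2/\eps^2$ in the smooth case is exactly what the paper's proof obtains as well; the $\sigma^2$ in the lemma statement is a typo, and your remark that convexity is not actually used in this lemma is also accurate.
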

\begin{proof}
Given $S$ and $x$ let $v$ be the output of $A$ on $S$ applied to $x$. We output $v + \zeta$, where $\zeta$ is distributed according to Laplace distribution with scale $\eps/\gamma$. By definition of uniform RO prediction stability and standard properties of the Laplace noise addition (e.g. \citep{DworkRoth:14}), this algorithm is $\eps$-differentially private. To obtain the claimed upper bound on the expected loss observe that if  $\ell$ is $L_\ell$-Lipschitz then
for any $S,x$ and $y$, 
$$\E_M[\ell(A(S,x) +\zeta,y)] \leq \ell(A(S,x),y) + \E_M[L_\ell \cdot |\zeta|] = \ell(A(S,x),y) +  L_\ell \cdot \gamma/\eps, $$ where we have used the fact that $|\zeta|$ is distributed according to an exponential distribution with rate $\gamma/\eps$.
This upper bound holds pointwise and therefore for any distribution $\cP$ over $X\times Y$.

If $\ell$ is $\sigma$-smooth, then by the definition of smoothness:
 for any $x$ and $y$, $$\ell(A(S,x)+\zeta,y) \leq \ell(A(S,x),y) + \ell'(A(S,x),y)\cdot \zeta + \frac{\sigma}{2} \cdot \zeta^2 .$$ Using the fact that $\E[\zeta] = 0$ and $\E[\zeta^2] = 2 \gamma^2/\eps^2$, we obtain
 $$\E_M[\ell(A(S,x)+\zeta,y)] \leq \ell(A(S,x),y) + \gamma^2\sigma/\eps^2. $$
\end{proof}

Naturally, many learning algorithms are not sufficiently prediction stable to ensure that the additional error due to noise is sufficiently small. However it is easy to see that it is possible to amplify stability by averaging the predictions obtained on disjoint subsamples. Convexity of the loss function then implies that such averaging will preserve the bounds on the expected loss. Specifically, the following lemma follows immediately from the argument above.
\begin{lem}
\label{lem:amplify-stability}
Let $A$ be a learning algorithm that outputs a real-valued function on $X$, is uniform RO prediction stable with rate $\gamma$ and uses $n$ samples. For any $r \in \N$ there exists a learning algorithm $A'$ that is uniform RO prediction stable with rate $\gamma' = \gamma/r$ that uses $n \cdot r$ samples. Further, for any loss function $\ell(\cdot,\cdot)$ convex in the first parameter, if for a distribution $\cP$ over $X\times Y$, $A$ has the guarantee that
$\E_{S\sim \cP^n}\lb\cE_\cP[\ell(A(S))] \rb \leq v$ for some value $v$ that may depend on $\cP$ and the parameters of the learning problem then $\E_{S'\sim \cP^{rn}}\lb\cE_{\cP}[\ell(A'(S'))] \rb \leq v .$
Alternatively, if for some $\beta > 0$,
$$\pr_{S\sim \cP^n}\lb\cE_\cP[\ell(A(S))] \geq v \rb \leq \beta $$ then
$$\pr_{S'\sim \cP^{rn}}\lb\cE_\cP[\ell(A'(S'))]  \geq v \rb \leq r\beta .$$
The running time of $A'$ is $r$ times the running time of $A$.
\end{lem}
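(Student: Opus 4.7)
The natural construction is to let $A'$ split the $rn$ input samples into $r$ disjoint subsamples $S'_1,\ldots,S'_r$ of size $n$ each, run $A$ on each to obtain predictors $f_1,\ldots,f_r$, and output the function $A'(S',x) = \frac{1}{r}\sum_{i=1}^r f_i(x)$. I would first verify prediction stability: changing a single example in $S'$ affects exactly one subsample $S'_i$, and hence only one predictor $f_i$. By the uniform RO prediction stability of $A$, the value $f_i(x)$ changes by at most $\gamma$, so the averaged value changes by at most $\gamma/r$, giving the claimed rate.

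The loss guarantees follow from convexity of $\ell(\cdot,y)$. Pointwise, for any $(x,y)$,
\[
\ell(A'(S',x),y) \;=\; \ell\!\left(\tfrac{1}{r}\sum_{i=1}^r f_i(x),\,y\right) \;\leq\; \tfrac{1}{r}\sum_{i=1}^r \ell(f_i(x),y).
\]
Integrating against $\cP$ gives $\cE_\cP[\ell(A'(S'))] \leq \frac{1}{r}\sum_{i=1}^r \cE_\cP[\ell(f_i)]$. For the in-expectation claim, I take expectation over $S' \sim \cP^{rn}$; since each $S'_i$ is an i.i.d.\ sample of size $n$ from $\cP$, each term $\E_{S'_i}[\cE_\cP[\ell(f_i)]]$ is at most $v$ by hypothesis, and averaging $r$ such quantities preserves the bound.

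For the high-probability version I would argue by contrapositive/union bound: the pointwise inequality above shows that if $\cE_\cP[\ell(A'(S'))] \geq v$, then at least one of the $r$ sub-predictors $f_i$ must satisfy $\cE_\cP[\ell(f_i)] \geq v$ (otherwise their average would be strictly below $v$). Since each $f_i$ is the output of $A$ on an i.i.d.\ sample of size $n$, each such event has probability at most $\beta$, and the union bound over the $r$ subsamples yields the desired $r\beta$ bound. The running time claim is immediate from the construction, since $A'$ simply invokes $A$ exactly $r$ times (the averaging at the query point is a constant-time overhead per prediction).

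There isn't really a hard step here: the whole argument is a direct use of Jensen/convexity together with the observation that replacing one point in $S'$ touches only one of the $r$ internally trained predictors. The only thing worth being careful about is that the stability rate must be expressed \emph{per coordinate of the averaged output}, not per coordinate of an individual $f_i$, so I would state the stability step explicitly before moving to the loss bounds.
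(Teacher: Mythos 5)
Your proposal is correct and follows essentially the same route as the paper, which presents this lemma as an immediate consequence of the subsample-and-average construction: splitting $S'$ into $r$ disjoint blocks, averaging the $r$ predictors (so a replace-one change touches a single $f_i$ and the average moves by at most $\gamma/r$), applying Jensen's inequality to the convex loss, and using linearity of expectation respectively a union bound for the two loss guarantees. Nothing is missing.
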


%We remark that the union bound on the probability of failure that we needed in the second part of the lemma is not necessary when the bound on the loss holds uniformly for the set of functions $\{A(S,\cdot)\}_{S \in (X,Y)^n}$.

\subsection{Agnostic learning}
We now spell out immediate corollary of Lemmas \ref{lem:amplify-stability} and \ref{lem:stability-to-privacy} to agnostic learning of Boolean functions. Agnostic learning of Boolean functions reduces to learning of a real-valued function $f$ with absolute loss $\ell(a,y) = |y-a|$. Note that a real-valued prediction $f(x)$ can also be equivalently thought of as predicting $1$ with probability $p$, where $p$ is equal to $f(x)$ projected to the interval $[0,1]$. The expected disagreement of such prediction on $y\in \zo$ is upper bounded by $|y-f(x)|$. This loss function is convex and 1-Lipschitz in the first parameter. Further, any learning algorithm that outputs a Boolean function is uniform RO prediction stable at the trivial rate $1$. Hence to ensure that the additional error in Lemma \ref{lem:stability-to-privacy} is at most $\alpha$ we need to amplify the stability to $\alpha \cdot \eps$. This requires $r = 1/(\alpha\eps)$ subsamples. Therefore overall we obtain the following general reduction for agnostic learning of Boolean function.
\begin{cor}
\label{cor:agnostic-reduction}
Let $C$ be a class of Boolean functions over $X$. Let $A$ be an agnostic learning algorithm for $C$ that uses $m(\alpha,\beta)$ samples to learn with excess error $\alpha$ and confidence parameter $\beta$. For every $\eps\in(0,1]$, there exists an $\eps$-differentially private prediction algorithm $M$ that agnostically learns $C$ given $n = 2\cdot m(\alpha/2,2\beta\alpha\eps)/(\alpha\eps)$ examples.
\end{cor}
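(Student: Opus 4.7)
The plan is to instantiate the two general tools of Section~\ref{sec:convex} with the absolute loss $\ell(a,y)=|a-y|$, which is convex and $1$-Lipschitz in its first argument, and to view each Boolean-valued hypothesis as a real-valued predictor in $[0,1]$. The key identity we exploit throughout is that for $y\in\{0,1\}$ and $p\in[0,1]$, outputting $1$ with probability $p$ yields expected disagreement from $y$ equal to $|p-y|$; hence the absolute loss of a $[0,1]$-valued predictor coincides with the prediction error of the randomized Boolean rule derived from it.

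First I set $r=\lceil 2/(\alpha\eps)\rceil$, split $S$ into $r$ disjoint subsamples of size $m(\alpha/2,2\beta\alpha\eps)$, run $A$ on each to obtain $f_1,\dots,f_r\in\{0,1\}^X$, and let $A'$ be the algorithm that returns the average $\bar f=\tfrac{1}{r}\sum_i f_i$. Since each $f_i$ takes values in $\{0,1\}$, the base algorithm $A$ is trivially uniform RO prediction stable at rate $\gamma=1$, so by Lemma~\ref{lem:amplify-stability} the averaged algorithm $A'$ is uniform RO prediction stable at rate $\gamma'=1/r\leq \alpha\eps/2$, while convexity of $\ell$ preserves the per-distribution loss guarantee $\cE_\cP[\ell(A'(S))]\leq \opt_\cP(C)+\alpha/2$. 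Applying the high-probability variant of Lemma~\ref{lem:amplify-stability} together with a union bound over the $r$ subsamples (whose per-subsample failure probability is $2\beta\alpha\eps$) yields this bound with the desired overall confidence.

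Next I apply Lemma~\ref{lem:stability-to-privacy} to $A'$ with the $1$-Lipschitz loss $\ell$: the resulting $\eps$-differentially private prediction algorithm $M$ outputs $\bar f(x)+\zeta$ with $\zeta\sim\mathrm{Lap}(\gamma'/\eps)$, clipped to $[0,1]$ and then interpreted as a Bernoulli bias. The lemma gives $\cE_\cP[\ell(M(S))]\leq \cE_\cP[\ell(A'(S))]+L_\ell\gamma'/\eps\leq \cE_\cP[\ell(A'(S))]+\alpha/2$ pointwise in $S$, and clipping only decreases $|y-\cdot|$ for $y\in\{0,1\}$ so it does not worsen the bound. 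Chaining the two inequalities gives $\err_\cP(M(S))=\cE_\cP[\ell(M(S))]\leq \opt_\cP(C)+\alpha$ with high probability, which is the required agnostic guarantee.

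The main obstacle is bookkeeping rather than a new idea: we need the confidence parameter passed to each invocation of $A$ to be small enough that, after union bounding over $r=\Theta(1/(\alpha\eps))$ subsamples, the aggregate failure probability remains $O(\beta)$, which is what forces the $2\beta\alpha\eps$ factor inside $m(\cdot,\cdot)$. A minor but essential technical point is that the Laplace perturbation may leave $[0,1]$ before clipping, and we need the inequality $|y-\mathrm{clip}(v)|\leq|y-v|$ for $y\in\{0,1\}$ and $v\in\R$ to propagate the expected-loss bound from the noisy real-valued output to the final randomized Boolean prediction.
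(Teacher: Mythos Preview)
Your proof is correct and follows essentially the same approach as the paper: view the Boolean predictions under the convex $1$-Lipschitz absolute loss, use the trivial rate-$1$ stability of any Boolean-valued learner, amplify via Lemma~\ref{lem:amplify-stability} with $r=\Theta(1/(\alpha\eps))$ subsamples, and then apply Lemma~\ref{lem:stability-to-privacy} with Laplace noise, finishing by clipping to $[0,1]$ and reading the output as a Bernoulli bias. The paragraph immediately preceding the corollary in the paper sketches exactly this argument, including the observation that the expected disagreement of the randomized Boolean rule is upper bounded by $|y-f(x)|$, which is your clipping inequality.
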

As in the case of PAC learning, this reduction allows us to upper bound the sample complexity of private prediction for agnostic learning of $C$.
Specifically, $n= O\lp\frac{d + \log(1/\beta)}{\alpha^2}\rp$ samples suffice to agnostically learn a class VC dimension $d$ (e.g. \citep{Shalev-ShwartzBen-David:2014}). Plugging this into our reduction we get:
\begin{cor}[Cor.~\ref{cor:agnostic-sample-complexity-intro} restated]
Let $C$ be a class of Boolean functions of VC dimension $d$. Then for all $\alpha,\beta,\eps \in (0,1]$ there exists an $\eps$-differentially private prediction algorithm $M$ that agnostically learns $C$ with excess error $\alpha$ and confidence $1-\beta$ given $n = O\lp\frac{d  + \log(1/(\alpha\beta\eps))}{\eps\alpha^3}\rp$ examples.
\end{cor}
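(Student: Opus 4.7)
The plan is to prove this corollary by a direct application of Corollary \ref{cor:agnostic-reduction} together with the classical VC-based sample complexity bound for (non-private) agnostic learning. There is no new probabilistic content; the work is bookkeeping on the parameters.

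First, I would take $A$ to be any standard agnostic learner for $C$ (for instance, empirical risk minimization), which by classical uniform convergence for VC classes satisfies $m(\alpha,\beta) = O\!\lp\frac{d + \log(1/\beta)}{\alpha^2}\rp$ examples to achieve excess error $\alpha$ with confidence $1-\beta$. Then I would invoke Corollary \ref{cor:agnostic-reduction}: it turns $A$ into an $\eps$-differentially private prediction algorithm $M$ that agnostically learns $C$ from
\[
n \;=\; \frac{2\, m(\alpha/2,\; 2\beta\alpha\eps)}{\alpha\eps}
\]
examples.

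Next I would just substitute. Plugging $\alpha/2$ in for the error parameter multiplies the $1/\alpha^2$ term by a constant, and plugging $2\beta\alpha\eps$ in for the confidence parameter turns $\log(1/\beta)$ into $\log(1/(2\beta\alpha\eps)) = O(\log(1/(\beta\alpha\eps)))$. Thus
\[
m(\alpha/2,\; 2\beta\alpha\eps) \;=\; O\!\lp\frac{d + \log(1/(\beta\alpha\eps))}{\alpha^2}\rp,
\]
and dividing by $\alpha\eps$ as dictated by the reduction yields
\[
n \;=\; O\!\lp\frac{d + \log(1/(\alpha\beta\eps))}{\eps\alpha^3}\rp,
\]
which is exactly the claimed bound.

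There is really no hard step: Corollary \ref{cor:agnostic-reduction} already packages all of the genuinely new content (the averaging-based stability amplification and the Laplace noise analysis), so the only thing to double-check is that the confidence blow-up from a factor of $1/(\alpha\eps)$ subsamples (which inflates $\beta$ to $\beta/r$ inside the reduction, hence requires running $A$ with confidence parameter $2\beta\alpha\eps$) is absorbed inside the logarithm without hurting the final rate. Since $\log(1/(\alpha\beta\eps))$ only adds an additive term inside the numerator and not a multiplicative factor on $d$, the final bound matches Corollary \ref{cor:agnostic-sample-complexity-intro} as stated.
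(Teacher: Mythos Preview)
Your proposal is correct and follows exactly the paper's own argument: the paper simply states the standard agnostic VC bound $m(\alpha,\beta)=O\lp\frac{d+\log(1/\beta)}{\alpha^2}\rp$ and plugs it into Corollary~\ref{cor:agnostic-reduction}, which is precisely what you do. There is nothing to add.
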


\subsection{Applications to convex regression problems}
We now apply this general approach to convex regression problems. Specifically, problems of the form:
$$\min_{w \in \K} \E_{(x,y)\sim \cP}[ \ell(f(w,x),y)],$$ where $\K \subset \R^d$ is some convex body and $\ell(f(\cdot,x),y)$ is a convex function over $\K$ for all $(x,y) \in X\times Y$. For simplicity, we will restrict our attention to the case when $\K$ is a subset of the Euclidean ball of radius $R$ which we denote by $\B_2^d(R)$. Several classes of such problems are known to be solvable efficiently by uniform RO prediction stable algorithms.
Our result will be based on the following upper bound on prediction stability of strongly convex optimization that is implicit in \citep{BousquettE02,ShwartzSSS10}.
\begin{thm}[\citep{ShwartzSSS10}]
\label{thm:ssss}
Let $\K \subseteq \B_2^d(R)$ be a convex body, $\{f(\cdot, x) \cond x\in X\}$ be a family of $L_f$-Lipschitz functions over $\K$, $\ell:\R\times Y\rar \R$ be convex, $L_\ell$-Lipschitz loss function and $\ell(f(\cdot, x),y)$ be $\lambda$-strongly convex for all $(x,y) \in X\times Y$. For a dataset $S \in (X\times Y)^n$ let $w_S$ denote the empirical minimizer of loss on $S$: $w_S =\argmin_{w \in \K} \sum_{(x,y)\in S}[\ell(f(w,x),y)]$. Then the algorithm that given $S$, outputs a function $f(w_S,\cdot)$ is uniform RO prediction stable with rate $\frac{4 L_f^2 \cdot L_\ell}{\lambda n}$. Further, for every distribution $\cP$ over $X\times Y$:
$$\E_{S\sim \cP^n} \lb \E_\cP[\ell(f(w_S,x),y)] \rb  \leq  \min_{w \in \K} \E_\cP[\ell(f(w,x),y)] + \frac{4 L_f^2 \cdot L_\ell^2}{\lambda n}.$$
\end{thm}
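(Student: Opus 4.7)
The plan is to prove the two claims in sequence: first the prediction stability bound for the ERM $w_S$, and then derive the excess risk bound as a direct consequence of stability.

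For the stability bound, I would rely on the standard ``two-way'' strong convexity inequality at the two empirical minimizers. Let $L_S(w) = \frac{1}{n}\sum_{(x,y)\in S}\ell(f(w,x),y)$, which inherits $\lambda$-strong convexity from each summand. For neighboring datasets $S,S'$ with minimizers $w_S, w_{S'}$, strong convexity combined with optimality of each minimizer yields
\[
L_S(w_{S'}) - L_S(w_S) \geq \tfrac{\lambda}{2}\|w_S - w_{S'}\|^2, \qquad L_{S'}(w_S) - L_{S'}(w_{S'}) \geq \tfrac{\lambda}{2}\|w_S - w_{S'}\|^2.
\]
Summing eliminates the minima and leaves a quantity involving only the one example that differs between $S$ and $S'$. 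Composition of $L_f$-Lipschitz $f(\cdot, x)$ with $L_\ell$-Lipschitz $\ell(\cdot, y)$ makes $\ell(f(\cdot, x),y)$ into a $(L_f L_\ell)$-Lipschitz function of $w$, so the sum is at most (up to a factor of $2$ accounting for removal/addition of the differing example) $\frac{2 L_f L_\ell}{n}\|w_S - w_{S'}\|$. Dividing through yields $\|w_S - w_{S'}\| \le \frac{4 L_f L_\ell}{\lambda n}$, and a final application of Lipschitzness of $f(\cdot, x)$ gives the claimed prediction-stability rate $\gamma = \frac{4 L_f^2 L_\ell}{\lambda n}$.

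For the excess risk bound, I would use the classical stability-implies-generalization argument. Prediction stability at rate $\gamma$ combined with $L_\ell$-Lipschitzness of $\ell(\cdot, y)$ gives loss-stability at rate $L_\ell \gamma = \frac{4 L_f^2 L_\ell^2}{\lambda n}$. The standard symmetrization/``replace-one-example'' trick (rename the $i$-th example in $S$ by a fresh sample and use exchangeability) shows that for any uniformly loss-stable algorithm $A$,
\[
\E_{S \sim \cP^n}\bigl[\cE_\cP[\ell(A(S))] - \cE_S[\ell(A(S))]\bigr] \leq \text{loss-stability rate}.
\]
Combined with the optimality property $\cE_S[\ell(A(S))] \le \cE_S[\ell(f(w^*, \cdot))]$ for any fixed $w^* \in \K$ and $\E_S[\cE_S[\ell(f(w^*,\cdot))]] = \E_\cP[\ell(f(w^*,\cdot))]$, taking $w^*$ to be the population minimizer delivers the claimed bound.

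The only delicate point is matching the constants $4 L_f^2 L_\ell/(\lambda n)$ and $4 L_f^2 L_\ell^2/(\lambda n)$ precisely; this amounts to bookkeeping in the ``single-swap'' Lipschitz estimate in Step 1 and in the symmetrization in Step 2. The results themselves are straightforward adaptations of the stability analysis of \citep{BousquettE02,ShwartzSSS10}, and the only substantive observation beyond that prior work is that the proofs there implicitly bound pointwise differences $|f(w_S,x)-f(w_{S'},x)|$ (not merely the loss), which is exactly the quantity relevant to prediction stability as defined here.
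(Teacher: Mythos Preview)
Your proposal is correct and follows exactly the approach the paper intends: the paper does not give a self-contained proof but simply notes that \cite{ShwartzSSS10} establishes the parameter-stability bound $\|w_S - w_{S'}\|_2 \le \frac{4L}{\lambda n}$ (with $L = L_f L_\ell$) and then applies the $L_f$-Lipschitzness of $f(\cdot,x)$ to convert this to prediction stability. Your two-sided strong-convexity argument is precisely the proof of that parameter bound, and your stability-to-generalization step is the standard one from the same references; the only slack is in the constant (your argument actually yields $2$ rather than $4$), which, as you note, is bookkeeping and only strengthens the conclusion.
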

We remark that this version may appear somewhat different from the results in \citep{ShwartzSSS10} as they consider a single convex loss function with Lipshitz constant $L$ that gives the loss of the model with parameter $w$ on an example. Our statement follows from noting that their work proves that
for any pair of datasets $S$ and $S'$ that differ in a single element, $\|w_S - w_{S'}\|_2 \leq \frac{4L}{\lambda n}$. This implies that for all $x$, $$|f(w_S,x) - f(w_{S'},x)| \leq \frac{4L \cdot  L_f}{\lambda n} \leq \frac{4 L_f^2 \cdot L_\ell}{\lambda n} .$$

By combining this result with Lemma \ref{lem:stability-to-privacy} we get the following private prediction algorithm.
\begin{cor}
\label{cor:private-strongly-convex}
Let $\K \subseteq \B_2^d(R)$ be a convex body, $\{f(\cdot, x) \cond x\in X\}$ be a family of $L_f$-Lipschitz functions over $\K$, $\ell:\R\times Y\rar \R$ be convex, $L_\ell$-Lipschitz loss function and $\ell(f(\cdot, x),y)$ be $\lambda$-strongly convex for all $(x,y) \in X\times Y$.
For every $\eps > 0$, there exists an $\eps$-differentially private prediction algorithm $M$ that for any probability distribution $\cP$ over $X\times Y$ satisfies:
$$\E_{S\sim \cP^n}\left[\cE_{\cP}[\ell(M(S))]\right] \leq \min_{w \in \K} \E_\cP[\ell(f(w,x),y)] + \frac{4 L_f^2 \cdot L_\ell^2}{\lambda n} \cdot \left(1 + \fr{\eps}\right).$$
\end{cor}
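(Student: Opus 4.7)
The plan is to instantiate the two already-proved tools in sequence. First, let $A$ be the empirical risk minimizer on $\K$, i.e., $A(S,\cdot) = f(w_S,\cdot)$ with $w_S \in \argmin_{w\in \K} \sum_{(x,y)\in S}\ell(f(w,x),y)$. Theorem \ref{thm:ssss} applies verbatim under the stated assumptions ($\K \subseteq \B_2^d(R)$, $L_f$-Lipschitz parameterization, $L_\ell$-Lipschitz convex loss, $\lambda$-strong convexity of $\ell(f(\cdot,x),y)$), and gives me simultaneously (i) that $A$ is uniform RO prediction stable with rate $\gamma = 4 L_f^2 L_\ell/(\lambda n)$, and (ii) the ERM excess-risk bound $\E_{S\sim \cP^n}[\cE_\cP[\ell(A(S))]] \leq \min_{w \in \K} \E_\cP[\ell(f(w,x),y)] + 4 L_f^2 L_\ell^2/(\lambda n)$.

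Second, I feed this $A$ into Lemma \ref{lem:stability-to-privacy}, Lipschitz branch. Because $\ell(\cdot,y)$ is $L_\ell$-Lipschitz in its first argument and $A$ has RO prediction stability $\gamma$, the lemma produces an $\eps$-differentially private prediction algorithm $M$ (concretely, $M(S,x) = A(S,x) + \zeta$ with $\zeta \sim \mathrm{Lap}(\gamma/\eps)$) satisfying, for every dataset $S$ and every distribution $\cP$,
\[
\cE_\cP[\ell(M(S))] \;\leq\; \cE_\cP[\ell(A(S))] + L_\ell \cdot \gamma/\eps.
\]

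Third, I take $\E_{S\sim\cP^n}[\,\cdot\,]$ of the previous pointwise inequality and plug in the ERM bound from step one, together with $\gamma = 4 L_f^2 L_\ell/(\lambda n)$. The deterministic overhead from noise becomes $L_\ell \gamma/\eps = 4 L_f^2 L_\ell^2/(\lambda n \eps)$, and adding this to the ERM excess-risk term $4 L_f^2 L_\ell^2/(\lambda n)$ factors cleanly as $\frac{4 L_f^2 L_\ell^2}{\lambda n}(1 + 1/\eps)$, which is exactly the bound claimed. Privacy of $M$ is inherited directly from Lemma \ref{lem:stability-to-privacy} and needs no extra argument.

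There is essentially no obstacle here: the corollary is a black-box composition of Theorem \ref{thm:ssss} and Lemma \ref{lem:stability-to-privacy}. The only minor care-point is bookkeeping the two Lipschitz constants separately ($L_f$ for $f(\cdot,x)$ and $L_\ell$ for $\ell(\cdot,y)$) so that the stability rate $\gamma$ carries the factor $L_f^2 L_\ell$ while the post-noise overhead multiplies by an additional $L_\ell$, producing the $L_f^2 L_\ell^2$ in the final expression.
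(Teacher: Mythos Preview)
Your proposal is correct and follows exactly the approach the paper takes: the paper simply states ``By combining this result with Lemma \ref{lem:stability-to-privacy} we get the following private prediction algorithm,'' and your write-up spells out that combination (Theorem \ref{thm:ssss} for stability rate $\gamma = 4L_f^2 L_\ell/(\lambda n)$ and the excess-risk bound, then Lemma \ref{lem:stability-to-privacy} for the Laplace-noise overhead $L_\ell\gamma/\eps$) with the arithmetic done correctly.
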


Corollary \ref{cor:private-strongly-convex} requires strong convexity to obtain meaningful guarantees. However, as pointed out in \citep{ShwartzSSS10}, it is possible to add a strongly convex regularizing term $\lambda \|w\|^2$ to the objective function that has sufficiently small effect on the loss function while ensuring stability (and generalization). Specifically, by setting $\lambda = \frac{2 L_fL_\ell}{R \sqrt{n\eps/(1+\eps)}}$ the objective function will change by at most $\frac{2 R L_fL_\ell}{\sqrt{n\eps/(1+\eps)}}$ since $w$ is assumed to be in a ball of radius $R$. Plugging this value of $\lambda$ into Corollary \ref{cor:private-strongly-convex} and accounting for the additional error we get:
\begin{cor}
\label{cor:private-convex}
Let $\K \subseteq \B_2^d(R)$ be a convex body, $\{f(\cdot, x) \cond x\in X\}$ be a family of $L_f$-Lipschitz functions over $\K$, $\ell:\R\times Y\rar \R$ be convex, $L_\ell$-Lipschitz loss function and $\ell(f(\cdot, x),y)$ be convex for all $(x,y) \in X\times Y$.
For every $\eps > 0$, there exists an $\eps$-differentially private prediction algorithm $M$ that for any probability distribution $\cP$ over $X\times Y$ satisfies:
$$\E_{S\sim \cP^n}\left[\cE_{\cP}[\ell(M(S))]\right] \leq \min_{w \in \K} \E_\cP[\ell(f(w,x),y)] + \frac{4 \cdot R\cdot L_f \cdot L_\ell}{\sqrt{n \eps/(1+\eps)}} .$$
\end{cor}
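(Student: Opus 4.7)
The plan is to reduce to Corollary \ref{cor:private-strongly-convex} by adding a strongly convex regularizer. Since only convexity (not strong convexity) of $\ell(f(\cdot,x),y)$ is assumed, I would apply the strongly convex result to the regularized problem in which the per-example objective is replaced by $\ell(f(w,x),y) + \lambda \|w\|^2$ for a parameter $\lambda > 0$ to be chosen later. The regularizer makes each per-example objective $\lambda$-strongly convex in $w$ (in the paper's convention), so Corollary \ref{cor:private-strongly-convex} applied to the regularized problem yields an $\eps$-differentially private prediction algorithm $M$ whose expected loss under the regularized objective exceeds the regularized optimum by at most $\frac{4 L_f^2 L_\ell^2}{\lambda n}\bigl(1+\tfrac{1}{\eps}\bigr)$.

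Converting this back to a guarantee under $\ell$ uses two monotonicity observations: (i) since the regularizer is non-negative, the expected value of $\ell(M(S))$ is upper bounded by its expected value under the regularized objective; and (ii) since $\|w\|^2\le R^2$ on $\K$, the minimum of the regularized risk is at most $\lambda R^2$ larger than the minimum of the unregularized risk. Chaining these two observations with the bound from Corollary \ref{cor:private-strongly-convex} gives
$$\E_{S\sim\cP^n}[\cE_\cP[\ell(M(S))]] \;\le\; \min_{w\in\K}\E_\cP[\ell(f(w,x),y)] \;+\; \lambda R^2 \;+\; \frac{4 L_f^2 L_\ell^2 (1+\eps)}{\lambda n \eps}.$$
The final step is to balance the two error terms in $\lambda$. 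Setting their derivatives in $\lambda$ to zero and solving gives $\lambda = \tfrac{2 L_f L_\ell}{R\sqrt{n\eps/(1+\eps)}}$, at which value each error term equals $\tfrac{2 R L_f L_\ell}{\sqrt{n\eps/(1+\eps)}}$, so their sum is exactly the claimed bound $\tfrac{4 R L_f L_\ell}{\sqrt{n\eps/(1+\eps)}}$.

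The main subtlety (the hard part, such as it is) is that adding the quadratic regularizer slightly changes the Lipschitz constant of the per-example objective in $w$, from $L_f L_\ell$ up to $L_f L_\ell + 2\lambda R$, and this constant is what is implicitly used in the proof of Theorem \ref{thm:ssss} underlying Corollary \ref{cor:private-strongly-convex}. For the chosen $\lambda$ one has $\lambda R = O\bigl(L_f L_\ell /\sqrt{n\eps/(1+\eps)}\bigr)$, which is no larger than $L_f L_\ell$ in the meaningful regime, so this correction only affects the overall constant and can be absorbed. A small formal point is that the regularized per-example loss is no longer a pure function of $f(w,x)$, so Corollary \ref{cor:private-strongly-convex} does not apply verbatim; I would handle this either by re-running the short proof of Theorem \ref{thm:ssss} with the regularizer included (the only properties used are $\lambda$-strong convexity and a Lipschitz bound on the per-example objective in $w$, both of which hold), or by augmenting the prediction to $(f(w,x), w)$ with a matching augmented loss that places the regularizer on the second coordinate.
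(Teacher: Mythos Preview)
Your approach is essentially identical to the paper's: add the regularizer $\lambda\|w\|^2$, invoke Corollary~\ref{cor:private-strongly-convex} on the regularized problem, bound the change in optimum by $\lambda R^2$, and balance the two error terms by choosing $\lambda = \tfrac{2 L_f L_\ell}{R\sqrt{n\eps/(1+\eps)}}$. The two subtleties you flag (the slight change in the Lipschitz constant and the fact that the regularized per-example loss is not literally of the form $\ell(f(w,x),y)$) are glossed over in the paper's brief argument as well; your proposed fix of re-running Theorem~\ref{thm:ssss} directly is the right way to make this fully rigorous.
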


Somewhat stronger results can be obtained for regression problems in which the loss function is also smooth. In this case we can rely on the stability of gradient descent for smooth functions implicit in \citep{HardtRS16}. This result applies to the projected stochastic gradient descent algorithm. For concreteness, let PSGD$_\eta$ denote the algorithm that starting from $w_0$ being the origin, performs the following iterative updates for every $i\in [n]$: $$w_{i+1} \leftarrow \proj_\K(w_i + \eta \cdot \nabla \ell(f(w_i,x_i),y_i)) ,$$ where $\proj_\K$ denotes projection to $\K$. The algorithm returns the average iterate: $\bar{w}_S \doteq \fr{n} \sum_{i\in [n]} w_i$.
\begin{thm}[\citep{HardtRS16}]
\label{thm:hrs}
Let $\K \subseteq \B_2^d(R)$ be a convex body, $\{f(\cdot, x) \cond x\in X\}$ be a family of $L_f$-Lipschitz functions over $\K$, $\ell:\R\times Y\rar \R$ be convex, $L_\ell$-Lipschitz loss function and $\ell(f(\cdot, x),y)$ be convex and $\sigma$-smooth for all $(x,y) \in X\times Y$. For a dataset $S \in (X\times Y)^n$ let $\bar{w}_S$ denote the output of PSGD$_\eta$ for $\eta = R/(L_f L_\ell \sqrt{n})$. If $\sigma \leq 2/\eta$ then the algorithm that outputs $f(\bar{w}_S,\cdot)$ is uniform RO prediction stable with rate $R L_f/\sqrt{n}$. Further, for every distribution $\cP$ over $X\times Y$:
$$\E_{S\sim \cP^n} \lb \E_\cP[\ell(f(w_S,x),y)] \rb  \leq  \min_{w \in \K} \E_\cP[\ell(f(w,x),y)] + \frac{2 L_f \cdot L_\ell 
\cdot R}{\sqrt{n}}.$$
\end{thm}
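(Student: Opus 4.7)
The plan is to adapt the stability and convergence analysis of projected stochastic gradient descent from \citep{HardtRS16}, splitting the statement into two essentially independent parts: a uniform replace-one prediction stability bound, and an expected excess risk bound that is the standard SGD guarantee for convex Lipschitz objectives.

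For the stability part, I would fix neighboring datasets $S,S'$ differing only at index $k$ and track the iterate distance $\Delta_i \doteq \|w_i^S - w_i^{S'}\|_2$ across the one-pass trajectory. The key technical ingredient is that for a convex $\sigma$-smooth function $g$ with $\eta\sigma \leq 2$, the gradient step $u \mapsto u - \eta\nabla g(u)$ is non-expansive. This follows from co-coercivity, $\langle \nabla g(u)-\nabla g(v),u-v\rangle \geq \fr{\sigma}\|\nabla g(u)-\nabla g(v)\|^2$, by expanding $\|(u-\eta\nabla g(u))-(v-\eta\nabla g(v))\|^2$ and using $\eta \leq 2/\sigma$ to cancel the $\eta^2$ term. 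Since $\proj_\K$ is non-expansive on any convex set, a full PSGD step is non-expansive whenever $S$ and $S'$ feed it the same example. Therefore $\Delta_i = 0$ for $i \leq k$; at the one differing step the two gradients of $\ell(f(\cdot,x_k),y_k)$ have norm at most $L_f L_\ell$ (chain rule plus the Lipschitz hypotheses), giving $\Delta_{k+1} \leq 2\eta L_f L_\ell$; and non-expansiveness propagates this bound to every subsequent $i$. Jensen then yields $\|\bar w_S - \bar w_{S'}\|_2 \leq 2\eta L_f L_\ell = 2R/\sqrt n$, and $L_f$-Lipschitzness of $f(\cdot,x)$ converts this into prediction stability of order $RL_f/\sqrt n$ as claimed.

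For the excess risk part, write $F_i(w) = \ell(f(w,x_i),y_i)$ and $F_\cP(w) = \E_\cP[F_i(w)]$. The textbook projected-SGD recursion
$\|w_{i+1}-w^*\|^2 \leq \|w_i-w^*\|^2 - 2\eta(F_i(w_i)-F_i(w^*)) + \eta^2\|\nabla F_i(w_i)\|^2$
(using convexity of $F_i$ to lower-bound the inner product, and non-expansiveness of projection) telescopes over $i=1,\ldots,n$ to
$\sum_i (F_i(w_i)-F_i(w^*)) \leq R^2/(2\eta) + \eta n L_f^2 L_\ell^2/2.$
Taking expectation over $S\sim\cP^n$ and using that $w_i$ is determined by $x_1,\ldots,x_{i-1}$ (so $\E[F_i(w_i)\mid w_i] = F_\cP(w_i)$), followed by Jensen on the convex $F_\cP$ to pass from the average of $F_\cP(w_i)$ to $F_\cP(\bar w_S)$, produces
$\E_S[F_\cP(\bar w_S)] - \min_{w\in\K} F_\cP(w) \leq R^2/(2\eta n) + \eta L_f^2 L_\ell^2/2.$
Plugging in $\eta = R/(L_fL_\ell\sqrt n)$ balances the two terms at $RL_fL_\ell/\sqrt n$, absorbed into the factor of $2$ in the claimed bound.

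The main obstacle is the co-coercivity/non-expansiveness lemma under the precise threshold $\eta\sigma\leq 2$; everything else is telescoping and Jensen's inequality. Once that lemma is in hand, the one-pass deterministic schedule makes the stability bookkeeping unusually clean, because there is a single ``bad'' iteration at position $k$ and non-expansiveness carries the resulting gap forward unchanged, so no refined analysis of when the differing example is visited (as in the uniformly sampled version of \citep{HardtRS16}) is needed.
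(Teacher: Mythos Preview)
The paper does not supply its own proof of this statement: Theorem~\ref{thm:hrs} is quoted as a result of \citet{HardtRS16}, and the text immediately moves on to ``Plugging this result into our framework\ldots'' to derive Corollary~\ref{cor:private-convex-smooth}. So there is nothing in the paper to compare your argument against beyond the attribution.

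That said, your reconstruction is correct and is precisely the argument one extracts from \citep{HardtRS16} specialized to a single deterministic pass. The co-coercivity computation giving non-expansiveness of the gradient step under $\eta\sigma\leq 2$ is right, projection preserves it, and the one differing index contributes at most $2\eta L_f L_\ell$ to the iterate gap, which is then frozen by non-expansiveness and passed through the average and through $f$. Your excess-risk part is the standard online-to-batch telescoping for one-pass PSGD, using independence of $(x_i,y_i)$ from $w_i$ and Jensen on the convex population objective; with $\eta=R/(L_fL_\ell\sqrt n)$ the two terms each equal $RL_fL_\ell/(2\sqrt n)$, comfortably inside the stated $2RL_fL_\ell/\sqrt n$. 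One cosmetic remark: your stability calculation actually yields $2RL_f/\sqrt n$ rather than $RL_f/\sqrt n$; the extra factor of $2$ is the usual artifact of bounding $\|\nabla F_k^S-\nabla F_k^{S'}\|$ by the sum of the two gradient norms, and the paper's stated constant appears to simply drop it (it is immaterial for the downstream corollary).
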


Plugging this result into our framework we obtain the following stronger bound for convex and smooth functions. We will also additionally assume that the loss function $\ell(\cdot,y)$ is $\sigma_\ell$-smooth in the first parameter for all $y$. 
\begin{cor}
\label{cor:private-convex-smooth}
Let $\K \subseteq \B_2^d(R)$ be a convex body, $\{f(\cdot, x) \cond x\in X\}$ be a family of $L_f$-Lipschitz functions over $\K$, $\ell:\R\times Y\rar \R$ be convex, $L_\ell$-Lipschitz and $\sigma_\ell$-smooth loss function and $\ell(f(\cdot, x),y)$ be convex and $\sigma$-smooth for all $(x,y) \in X\times Y$. If $\sigma \leq 2L_f L_\ell \sqrt{n}/R$ then for every $\eps > 0$, there exists an $\eps$-differentially private prediction algorithm $M$ that for any probability distribution $\cP$ over $X\times Y$ satisfies:
$$\E_{S\sim \cP^n}\left[\cE_{\cP}[\ell(M(S))]\right] \leq \min_{w \in \K} \E_\cP[\ell(f(w,x),y)] + \frac{2 \cdot R\cdot L_f \cdot L_\ell}{\sqrt{n}} + \frac{\sigma_\ell \cdot R^2 \cdot L_f^2}{n \eps^2}.$$
\end{cor}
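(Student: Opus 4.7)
The plan is simply to compose Theorem~\ref{thm:hrs} with the smoothness branch of Lemma~\ref{lem:stability-to-privacy}: the former supplies a uniform RO prediction stable learner for convex smooth losses at rate $\gamma=RL_f/\sqrt{n}$, and the latter converts any such learner into an $\eps$-differentially private prediction algorithm whose extra loss scales with the smoothness $\sigma_\ell$ of $\ell$ and with $\gamma^2/\eps^2$. The two pieces fit together cleanly because the hypothesis on $\sigma$ in the corollary is precisely what is required to invoke Theorem~\ref{thm:hrs}.

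First, I would set $A$ to be PSGD$_\eta$ with step size $\eta = R/(L_f L_\ell \sqrt{n})$. Observe that the assumption $\sigma \leq 2 L_f L_\ell \sqrt{n}/R$ is exactly $\sigma \leq 2/\eta$, so Theorem~\ref{thm:hrs} applies. It gives two facts I need about $A$: (i) $A$ is uniform RO prediction stable with rate $\gamma = RL_f/\sqrt{n}$, and (ii) for every distribution $\cP$ over $X\times Y$,
\[
\E_{S\sim \cP^n}\lb \cE_\cP[\ell(f(\bar{w}_S,x),y)] \rb \leq \min_{w\in \K}\E_\cP[\ell(f(w,x),y)] + \frac{2 L_f L_\ell R}{\sqrt{n}}.
\]

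Next, I would apply Lemma~\ref{lem:stability-to-privacy}, case 2, to $A$: since $\ell(\cdot,y)$ is $\sigma_\ell$-smooth in its first argument for every $y$, there exists an $\eps$-differentially private prediction algorithm $M$ (obtained by adding Laplace noise of scale $\gamma/\eps$ to the evaluation of $A(S,\cdot)$ at the query point) such that for every fixed dataset $S$ and every $\cP$,
\[
\cE_\cP[\ell(M(S))] \leq \cE_\cP[\ell(A(S))] + \frac{\sigma_\ell \gamma^2}{\eps^2}.
\]
Substituting $\gamma = R L_f /\sqrt{n}$ turns the additive term into $\sigma_\ell R^2 L_f^2/(n\eps^2)$, matching the third term in the target bound.

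Finally, I would take the expectation over $S\sim \cP^n$ of the pointwise inequality from Lemma~\ref{lem:stability-to-privacy} and chain it with the excess-risk bound from Theorem~\ref{thm:hrs}; this yields the stated inequality exactly. There is no real obstacle: the corollary is essentially a specialization of the general ``stability-to-privacy'' template of Section~\ref{sec:convex} to the smooth convex stochastic gradient descent bound of \citep{HardtRS16}. The only subtlety worth flagging in the write-up is that two different smoothness assumptions enter the argument---smoothness of $w\mapsto \ell(f(w,x),y)$, which is needed to invoke Theorem~\ref{thm:hrs}, and smoothness of $a\mapsto \ell(a,y)$, which is needed to invoke the second case of Lemma~\ref{lem:stability-to-privacy}; the corollary imposes both hypotheses, so both invocations are legitimate.
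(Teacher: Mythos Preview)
Your proposal is correct and is exactly the argument the paper intends: the corollary is obtained by ``plugging'' Theorem~\ref{thm:hrs} (which supplies the stability rate $\gamma=RL_f/\sqrt{n}$ and the $2RL_fL_\ell/\sqrt{n}$ excess-risk term) into the smooth case of Lemma~\ref{lem:stability-to-privacy} (which supplies the additive $\sigma_\ell\gamma^2/\eps^2$ term), with the hypothesis $\sigma\le 2L_fL_\ell\sqrt{n}/R$ being precisely $\sigma\le 2/\eta$. Your remark distinguishing the two smoothness hypotheses (in $w$ for Theorem~\ref{thm:hrs}, in the prediction variable for Lemma~\ref{lem:stability-to-privacy}) is a useful clarification.
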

One way to interpret this result is that for sufficiently smooth loss functions, the error caused by noise become comparable to the statistical error when $\eps$ scales as $n^{-1/4}$. In other words, this level of differential privacy is obtained essentially for free.
We also remark that the assumption that $\ell(\cdot,y)$ is $\sigma_\ell$-smooth can also be used in Cor.~\ref{cor:private-convex} to obtain the same bound (up to a constant) as the one we got in Cor.~\ref{cor:private-convex-smooth}. Similarly, without this assumption Cor.~\ref{cor:private-convex-smooth} would give essentially the same bound as Cor.~\ref{cor:private-convex}.
\fi

\section{Learning of thresholds and unions of intervals}
\label{sec:thr}
We demonstrate a nearly optimal algorithm for agnostically learning the class of threshold functions on a line (and more generally unions of intervals).
For $N\in \mathbb{N}$ we consider threshold functions over $[N]$. Specifically, for $a\in [N+1]$, let $\theta_a$ denote the threshold function ``$x \geq a$'' over $[N]$ and let $\Thr_N$ denote the set of all $N+1$ threshold functions over $[N]$. More generally, we define a union of intervals by an increasing sequence of interval ends. Specifically,  for an increasing sequence of integer numbers $1\leq a_1<a_2<\cdots<a_k \leq N+1$, we define $\theta_{a_{[k]}}$ to be the function defined as follows: given $x\in M$ let $t(x)$ be the largest index such that $x\geq a_t$. Then $\theta_{a_{[k]}}(x)$ is equal to $1$ if and only if $t$ is odd. We denote by $\Thr_{N,k}$ the class of all functions of this type.

Our algorithm, referred to as the exponential projected walk, is described below. For convenience, we assume that the dataset $S=(x_1,y_1),\ldots,(x_n,y_n)$ is given in a sorted order, that is $x_i \leq x_j$ for all $i < j$. Further, we define $\proj_{[A,B]}(x)$ as the projection of $x$ onto interval $[A,B]$.
\begin{figure}[h!]
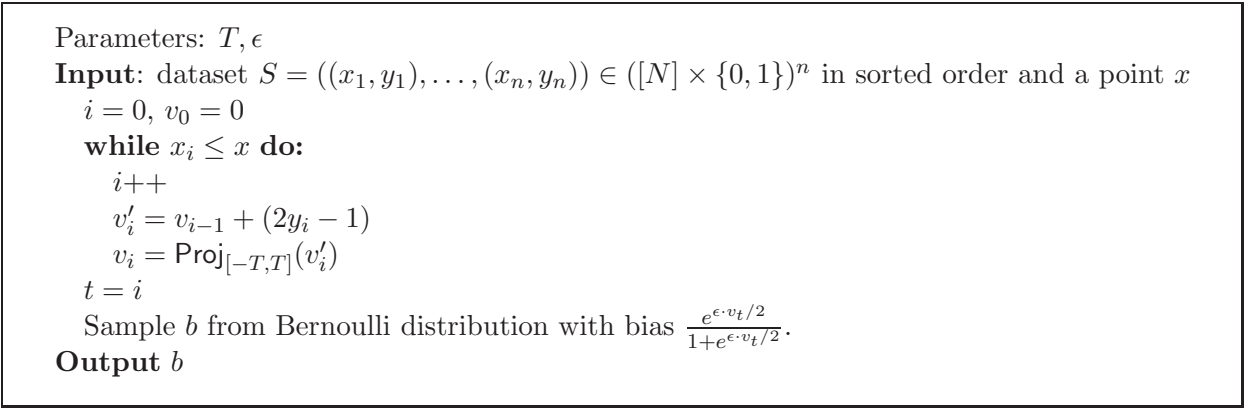

\begin{framed}
\begin{algorithmic}
\INDSTATE[0]{Parameters: $T,\eps$}
\INDSTATE[0]{{\bf Input}: dataset $S=((x_1,y_1),\ldots,(x_n,y_n)) \in ([N]\times \zo)^n$ in sorted order and a point $x$}
\INDSTATE[1]{$i=0$,\ $v_0 = 0$}
\INDSTATE[1]{{\bf while} $x_i \leq x$ \bf{do}:}
\INDSTATE[2]{$i$++}
\INDSTATE[2]{$v'_i = v_{i-1} + (2y_i-1)$}
\INDSTATE[2]{$v_i=\proj_{[-T,T]}(v'_i)$}
\INDSTATE[1]{$t = i$}
\INDSTATE[1]{Sample $b$ from Bernoulli distribution with bias $\frac{e^{\eps \cdot v_t/2}}{1+e^{\eps \cdot v_t/2}}$.}
\INDSTATE[0]{{\bf Output} $b$}
\end{algorithmic}
\end{framed}
%\vspace{-6mm}
\caption{$\epw(T,\eps)$: Exponential projected walk algorithm}\label{fig:ExpPW}
\end{figure}

We first prove that for any setting of parameters and $n$, the exponential projected walk is an $\eps$-differentially private prediction algorithm.
\begin{lem}
For any $n,T\in \mathbb{N}$, $\eps>0$, $\epw(T,\eps)$ is an $\eps$-differentially private prediction algorithm.
\end{lem}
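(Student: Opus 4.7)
The plan is to reduce $\eps$-differential privacy to a sensitivity bound $|v_t - v'_{t'}| \leq 2$ on the final walk state across neighboring executions, and then conclude by the sigmoid manipulation that closed the proof of Theorem~\ref{thm:pac-reduction} (using $e^{a-b}(1+e^b)/(1+e^a) \leq e^{|a-b|}$ and $(1+e^b)/(1+e^a)\leq e^{|a-b|}$), which immediately converts a sensitivity bound of $2$ into the desired $e^\eps$ ratio for both output symbols.

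Establishing the sensitivity bound is the main work. The computation depends only on the sequence $z_i = 2y_i - 1 \in \{-1,+1\}$ for the submultiset $S_x = \{(x_i,y_i) \in S : x_i \leq x\}$ in sorted order. If $S$ and $S'$ differ by replacing $(x_j,y_j)$ with $(x'_j, y'_j)$, then $S_x$ and $S'_x$ differ by at most one deletion (when $x_j \leq x$) and at most one insertion (when $x'_j \leq x$); depending on which of these $x$-values cross the threshold, this is no change, a pure insertion, a pure deletion, or a deletion together with an insertion at possibly different positions of the sorted order. The combinatorial core is a contraction lemma: because $\proj_{[-T,T]}$ is $1$-Lipschitz, any two walks that consume the same $z$-step satisfy $|u_i - u'_i| \leq |u_{i-1} - u'_{i-1}|$. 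I would use this to prove a single-edit bound: deleting the entry at position $j$ from a length-$t$ sequence $A$ to obtain a length-$(t-1)$ sequence $B$, and aligning $u^A_i$ with $u^B_{i-1}$ for $i \geq j$, induction via the contraction lemma gives $|u^A_t - u^B_{t-1}| \leq |u^A_j - u^A_{j-1}| \leq 1$, since $u^B_{j-1} = u^A_{j-1}$ and $|\proj(u^A_{j-1} + a_j) - u^A_{j-1}| \leq 1$. Composing this bound for one deletion and one insertion via the triangle inequality through the intermediate sequence $S_x \setminus \{(x_j,y_j)\}$ yields $|v_t - v'_{t'}| \leq 1 + 1 = 2$ in every case.

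The subtle point, and where I expect the only real care is needed, is the ``sorted-order shift'' phenomenon: a single raw-dataset replacement cannot be treated as a single-index edit in the sorted walk, since changing one example's $x$-coordinate can reshuffle many positions. The decomposition into one deletion plus one insertion at possibly unrelated positions, together with the contraction lemma that insulates the post-edit steps (where both walks read identical labels) from amplifying any discrepancy, is what resolves this cleanly; after that, the sigmoid step closes the proof.
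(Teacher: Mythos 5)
Your proposal is correct and follows essentially the paper's own argument: the paper likewise decomposes the single replacement into a deletion plus an insertion through the intermediate dataset $S^{-i}$, uses the contraction of the projected update to show each single edit changes the final state $V(\cdot,x)$ by at most $1$, and closes with the same sigmoid ratio bound. The only cosmetic difference is where the triangle inequality is applied (you add the two state-sensitivities to get $2$ and apply the sigmoid bound once, while the paper converts each edit to an $\eps/2$ max-divergence and composes), which is immaterial.
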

\begin{proof}
Let $S=(x_1,y_1),\ldots,(x_n,y_n)$ be a dataset in a sorted order and let $S'$ be a dataset that differs from $S$ in a single element. There exist indices $i$ and $j$ such that $S'$ can be seen as removing the example $i$ and then inserting example $(x',y')$ into $j$-th position so that the resulting sequence of examples is still in the sorted order. Let $S^{-i}$ denote $S$ with $i$-th element removed. We will prove that for any $x$, $D_\infty(M(S,x) \| M(S^{-i},x)) \leq \eps/2$ and $D_\infty(M(S^{-i},x) \| M(S,x)) \leq \eps/2$. Note that, by removing element $j$ from $S'$ we also obtain $S^{-i}$. Hence our argument will imply that $$D_\infty(M(S,x) \| M(S',x)) \leq  D_\infty(M(S,x) \| M(S^{-i},x)) + D_\infty(M(S^{-i},x) \| M(S',x)) \leq \eps .$$

Let $M$ denote $\epw(T,\eps)$ and let $V(S,x)$ denote the value of $v_t$ at the end of running $M(S,x)$. Note that in order to prove the claim it is sufficient to prove that $|V(S,x)-V(S^{-i},x)|\leq 1$. As in the proof of Theorem \ref{thm:pac-reduction}, the claim then follows immediately from the definition of the output distribution of $\epw$ being as: output $1$ with probability $\frac{e^{\eps \cdot V(S,x)/2}}{1+e^{\eps \cdot V(S,x)/2}}$ and $0$ with probability $\frac{1}{1+e^{\eps \cdot V(S,x)/2}}$.% and the fact that for arbitrary real $a$, $b$,
%$$ \frac{\frac{e^a}{1+e^a}}{\frac{e^b}{1+e^b}} = e^{a-b} \cdot \frac{1+e^b}{1+e^a}  \leq e^{|a-b|} \mbox{    and     }  \frac{1+e^b}{1+e^a} \leq e^{|a-b|} . $$

To show that $|V(S,x)-V(S^{-i},x)|\leq 1$ we observe that: for $x < x_i$ removal of $(x_i,y_i)$ does not affect the output of the algorithm. Hence $V(S,x)= V(S^{-i},x)$. For $x\geq x_i$ the values $v_0,\ldots,v_n$ of the walk on $S$ will have an additional step of length at most $1$ at index $i$. After that step the update points $(x_{i+1},y_{i+1}),\ldots, (x_t,y_t)$ will be identical for both sequences. Performing such an update step on two different values $u$ and $v$ does not increase the distance between the values. Hence at the end of the walk we obtain that $|V(S,x)-V(S^{-i},x)|\leq |V(S,x_i)-V(S^{-i},x_i)| \leq 1$.
\end{proof}

We now prove that our algorithm will achieve low empirical error.
Let $$\err_S(M(S)) \doteq \fr{n} \sum_{i\in[n]}\pr_{M}[M(S,x_i) \neq y_i]$$ and for a class of functions $C$ let
$\opt_S(C) \doteq \min_{f\in C} \err_S(f)$.
\begin{lem}
\label{lem:epw-empirical}
Let $S=(x_1,y_1),\ldots,(x_n,y_n) \in ([N]\times \zo)^n$ be a set of $n$ examples. Then for $M=\epw(T,\eps)$, $$\err_S(M(S)) \leq \opt_S(\Thr_{N,k}) + (k+2)T/n + e^{-\eps T/2} .$$
\end{lem}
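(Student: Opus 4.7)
The plan is to bound the empirical error by a regret decomposition against the optimal $h^* = \theta_{a^*_{[k]}} \in \Thr_{N,k}$ combined with a potential function argument tailored to the logistic output rule. Writing $\sigma(z) = 1/(1+e^{-z})$ for the logistic function, the algorithm's probability of outputting $1$ on query $x_i$ equals $p_i = \sigma(\eps v_i / 2)$, where $v_i$ is the walk value after processing the first $i$ examples. Since $h^*$ is constant on each of the $k+1$ maximal contiguous blocks of indices $B_0,\ldots,B_k$ determined by $a_1^*,\ldots,a_k^*$, we can decompose
\[
\err_S(M(S)) \;=\; \opt_S(\Thr_{N,k}) + \frac{1}{n}\sum_{s=0}^{k} R(B_s),
\]
where the block regret is $R(B_s) = \sum_{i \in B_s}\bigl(|p_i-y_i| - |h^*(x_i)-y_i|\bigr)$. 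A short calculation shows that on a block where $h^*\equiv 1$ the per-example regret simplifies to $\sigma(-\eps v_i/2)(2y_i - 1)$, and symmetrically on blocks where $h^*\equiv 0$ it is $\sigma(\eps v_i/2)(1-2y_i)$.

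The heart of the proof is the potential $F(v) = -\frac{2}{\eps}\ln(1+e^{-\eps v/2})$, whose derivative $F'(v) = \sigma(-\eps v/2)$ matches the $h^*=1$ regret coefficient exactly and which satisfies the clean identity $F(T) - F(-T) = T$. For a step $v_{i-1} \to v_i$ in an $h^* = 1$ block on which the projection is inactive, I will establish the key inequality
\[
\sigma(-\eps v_i/2)(2y_i - 1) \;\leq\; F(v_i) - F(v_{i-1})
\]
by comparing the per-step regret coefficient to $\int_{v_{i-1}}^{v_i}\sigma(-\eps v/2)\,dv$ and invoking monotonicity of $\sigma(-\eps v/2)$ in $v$; the inequality works uniformly for both step directions $\pm 1$. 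The two projection edge cases are easy to dispatch: if $v_{i-1}=T$ and $y_i=1$ the step contributes $\sigma(-\eps T/2)\leq e^{-\eps T/2}$ to the regret while $F$ is unchanged, and if $v_{i-1}=-T$ and $y_i=0$ the regret contribution is $-\sigma(\eps T/2)\leq 0$ and so costs nothing. Blocks with $h^*\equiv 0$ are handled identically via the symmetric potential $G(v)=F(-v)$.

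Putting the pieces together, each block satisfies
\[
R(B_s) \;\leq\; F(v_{\mathrm{end}(B_s)}) - F(v_{\mathrm{start}(B_s)}) + |B_s|\cdot e^{-\eps T/2}
\]
(or the $G$ analogue), and the telescoping term is bounded by $F(T)-F(-T)=T$. Summing across the $k+1$ blocks yields total regret at most $(k+1)T + n\cdot e^{-\eps T/2}$, and dividing by $n$ gives a bound of the form $\opt_S(\Thr_{N,k}) + O((k+1)T/n) + e^{-\eps T/2}$, which matches the stated bound up to a slack of one additional $T/n$ that is easily absorbed by a crude accounting of a single boundary transition. The main obstacle is identifying the correct potential: a naive choice such as $F(v) = v$ or $F(v) = |v|$ does not give a tight per-block range, and it is precisely the logistic-aligned $F$ together with the exact identity $F(T)-F(-T)=T$ that makes the telescoping absorb the per-block cost without blowup. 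Carefully handling the behavior at the projection boundaries $\pm T$ is what produces the residual $e^{-\eps T/2}$ term rather than a harsher correction.
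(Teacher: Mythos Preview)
Your argument is correct and takes a genuinely different route from the paper's. The paper does \emph{not} decompose by the $k+1$ intervals of $h^*$; instead it groups indices into $2T$ ``bands'' $I_v$ ($v=-T,\ldots,T-1$) according to which edge $[v,v+1]$ of the walk the $i$-th step traverses. The key combinatorial observation is that within each $I_v$ the labels must alternate, which simultaneously forces $\sum_{i\in I_v}\Pr[M(S,x_i)\neq y_i]\le (|I_v|+1)/2$ and, because $h^*$ is constant on at most $k+1$ pieces, $\sum_{i\in I_v}|h^*(x_i)-y_i|\ge (|I_v|-k-1)/2$. Subtracting and summing over the $2T$ bands yields the $(k+2)T$ term; points that hit $\pm T$ are handled separately and give the $e^{-\eps T/2}$ term. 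Your approach replaces this combinatorics with a potential matched to the logistic output, $F(v)=-\tfrac{2}{\eps}\ln(1+e^{-\eps v/2})$, and telescopes within each of the $k+1$ blocks of $h^*$; the identity $F(T)-F(-T)=T$ caps each block's contribution. This is an online-learning style regret bound rather than a counting argument, and in fact it is slightly sharper: you get $(k+1)T/n$ rather than $(k+2)T/n$, so the ``slack of one additional $T/n$'' you mention is not needed. The paper's proof is more elementary (no integrals, just pairing up alternating labels), while yours is cleaner to extend and makes the role of the logistic output transparent through $F'=\sigma(-\eps\,\cdot/2)$.
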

\begin{proof}
  As before, we assume that examples in $S$ are in the sorted order. Let $V(S,x)$ denote the value of $v_t$ at the end of running $M(S,x)$. Let $f \in \Thr_{N,k}$ be the interval function with the lowest error on $S$ and let $a_{[k]} = a_1,\ldots,a_k$ be its parameters.

  We first deal with points $x_i$ such that $V(S,x_i) \in \{-T,T\}$. We denote the set of indices of these points by $I$. Observe that if $V(S,x_i) = T$ then $y_i=1$. This is true since for $y_i=0$ the projected walk makes a $-1$ step and then projects to $[-T,T]$. Such step cannot end in $V(S,x_i) = T$. Similarly, if $V(S,x_i) = -T$ then $y_i=0$. This means that  for $i \in I$, $\pr_{M}[M(S,x_i) \neq y_i] = 1/(e^{\eps T/2}+1) \leq e^{-\eps T/2}$.
  Consequently,
  \equ{\sum_{i\in I}\pr_{M}[M(S,x_i) \neq y_i] \leq |I| \cdot e^{-\eps T/2}. \label{eq:err-boundary}}

  We now split the examples into bands according to the endpoints of the step that the projected walk took on them. Specifically, for $v \in \{-T,-T+1,\ldots,T-1\}$ let $I_v$ be the set of indices $i$ such that either $V(S,x_{i-1}) = v$ and $V(S,x_{i}) = v+1$ or $V(S,x_{i-1}) = v+1$ and $V(S,x_{i}) = v$. Let $J \doteq \bigcup_{-T\leq v \leq T-1} I_v$. Note that for $u\neq v$, $I_u \cap I_v = \emptyset$ but $I_{-T}$ and $I_{T-1}$ may include some of the points in $I$. Also the collection of these sets covers all the indices: $I \cup J = [n]$.

  We make several simple observations about examples with indices in $I_v$. Let $i_1 < i_2 <\cdots < i_\ell$ be the indices of points in $I_v$. The labels of points have to alternate, or $y_{i_j} \neq y_{i_{j+1}}$ for all $j\in [\ell-1]$. This is due to the fact that the walk cannot traverse the interval of values $[v,v+1]$ twice in a row in the same direction. We use this to compute the total error of both $M$ and $f$ on points with indices in $I_v$.

  If $y_{i_j} = 1$ then the projected walk made $+1$ step at $i_j$ and therefore $\pr_{M}[M(S,x_{i_j}) \neq y_{i_j}] = 1/(e^{\eps (v+1)/2}+1)$. While if $y_{i_j} = 0$ then the projected walk made $-1$ step at $i_j$ and therefore $\pr_{M}[M(S,x_{i_j}) \neq y_{i_j}] = e^{\eps v/2}/(e^{\eps v/2}+1)$. Note that $$ 1/(e^{\eps (v+1)/2}+1) + e^{\eps v/2}/(e^{\eps v/2}+1) \leq 1 .$$ Therefore for any pair of points with opposite labels the sum of expected errors is at most $1$. The alternation of labels for examples in $I_v$ then implies that
  \equ{\sum_{i\in I_v}\pr_{M}[M(S,x_i) \neq y_i] = \sum_{j\in[\ell]}\pr_{M}[M(S,x_{i_j}) \neq y_{i_j}] \leq \frac{|I_v|+1}{2}, \label{eq:v-err-upper}}
  where the additional $1$ is the bound on the probability of error on a point that has no pair with the opposite label (which happens when the size of $I_v$ is odd).

  Now consider the error of $f=\theta_{a_{[k]}}$ on points in $I_v$. Note that $f$ splits $[N]$ into at most $k+1$ intervals where the value of $f$ is constant. For $r \in [k+1]$ let $J_r$ denote the $r$-th interval and $I_{v,r} \doteq I_v \cap J_r$. The alternation of labels in $I_v$ implies that
   the number of points with indices in $I_{v,r}$ on which $f$ is correct can be larger than the number of points on which $f$ is wrong by at most $1$. That is:
  $$ \sum_{i\in I_{v,r}} |f(x_i)-y_i| \geq \frac{|I_{v,r}|-1}{2} .$$ Hence
  \equ{\sum_{i\in I_v} |f(x_i)-y_i| = \sum_{r\in [k+1]} \sum_{i\in I_{v,r}} |f(x_i)-y_i| \geq \sum_{r\in [k+1]} \frac{|I_{v,r}|-1}{2} = \frac{|I_v|-k-1}{2}. \label{eq:v-err-lower}}

  Combining the inequalities \eqref{eq:v-err-upper} and \eqref{eq:v-err-lower}, we obtain that
  $$ \sum_{i\in I_v}\pr_{M}[M(S,x_i) \neq y_i] \leq \sum_{i\in I_v} |f(x_i)-y_i| +\frac{k+2}{2}.$$ Summing up over all values of $v \in \{-T,-T+1,\ldots,T-1\}$ we get
  \alequn{\sum_{i\in J} \pr_{M}[M(S,x_i) \neq y_i] &= \sum_{v\in \{-T,-T+1,\ldots,T-1\}} \sum_{i\in I_v} \pr_{M}[M(S,x_i) \neq y_i] \\
   & \leq  \sum_{v\in \{-T,-T+1,\ldots,T-1\}} \lp\sum_{i\in I_v} |f(x_i)-y_i| \rp +\frac{k+2}{2}\\
   & = \sum_{i\in J} |f(x_i)-y_i| + \frac{2T(k+2)}{2} \\
   & \leq n \cdot \err_S(f) + T(k+2)}

  Finally, $I \cup J = [n]$ and therefore combining this with equation \eqref{eq:err-boundary} we get:
  \alequn{\err_S(M(S)) &\leq \fr{n} \lp \sum_{i\in J} \pr_{M}[M(S,x_i) \neq y_i] + \sum_{i\in I} \pr_{M}[M(S,x_i) \neq y_i] \rp \\
  & \leq \err_S(f) + \frac{T(k+2)}{n} + \frac{|I|}{n} \cdot e^{-\eps T/2}\\
  & \leq \opt_S(\Thr_{N,k}) + \frac{T(k+2)}{n} + e^{-\eps T/2} .}
\end{proof}

Now by choosing $T = \lceil 2 \ln(2/\alpha)/\eps \rceil$ we can ensure that the empirical error of $\epw(T,\eps)$ is close to the best possible by a function in $\Thr_{N,k}$. To prove that our algorithm generalizes we appeal to generalization properties of differentially private prediction described in Section \ref{sec:generalization}.

\begin{thm}[subsumes Thm.~\ref{thm:thr-intro}]
\label{thm:thr-expectation}
For any $\alpha,\eps > 0$ and $k,N\in \N$, $T = \lceil 2 \ln(2/\alpha)/\eps \rceil$ let $M\doteq \epw(T,\eps)$. Then $M$ is an $\eps$-differentially private prediction algorithm and given $n \geq \frac{4(k+2) \ln(2/\alpha)}{\alpha \eps}$ examples from an arbitrary distribution $\cP$ over $[N]\times \zo$ its output satisfies:
$$\E_{S \sim \cP^n}\lb \err_\cP(M(S)) \rb  \leq e^\eps \cdot (\opt_\cP(\Thr_{N,k}) + \alpha) .$$
In particular, setting $\eps = \alpha/2$ we get that for $n= O(k \ln(1/\alpha)/\alpha^2)$
$$\E_{S \sim \cP^n}\lb \err_\cP(M(S)) \rb  \leq \opt_\cP(\Thr_{N,k}) + \alpha .$$
Further, if $\opt_\cP(\Thr_{N,k})=0$ and $\eps \leq 1/(16\ln(1/\beta))$, then for every $\beta \in (0,1)$,
$$\pr_{S \sim \cP^n}\lb \err_\cP(M(S)) \geq 3\alpha \rb \leq 2\beta.$$
\end{thm}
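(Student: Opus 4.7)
The plan is to combine the per-sample empirical bound of Lemma \ref{lem:epw-empirical} with the moment-generalization bound of Lemma \ref{lem:generalize-moment-bound}. With $T = \lceil 2\ln(2/\alpha)/\eps\rceil$ we immediately get $e^{-\eps T/2} \leq \alpha/2$, and the assumption $n \geq 4(k+2)\ln(2/\alpha)/(\alpha\eps)$ forces $(k+2)T/n \leq \alpha/2$. Plugging into Lemma \ref{lem:epw-empirical} yields $\err_S(M(S)) \leq \opt_S(\Thr_{N,k}) + \alpha$ for \emph{every} sample $S$. Taking expectation over $S \sim \cP^n$ and observing that for the population-optimal $f^*$ we have $\E_S[\err_S(f^*)] = \err_\cP(f^*) = \opt_\cP(\Thr_{N,k})$, so $\E_S[\opt_S(\Thr_{N,k})] \leq \opt_\cP(\Thr_{N,k})$, gives $\E_S[\err_S(M(S))] \leq \opt_\cP(\Thr_{N,k}) + \alpha$.

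Now the privacy of $M$ (already established) lets me invoke Lemma \ref{lem:generalize-moment-bound} with the $0/1$ loss and $k=1$. Passing from $\cE_{S'}[\ell(M(S))]$ to $\cE_\cP[\ell(M(S))]$ by taking a fresh independent $S'\sim\cP^n$ (so that $\E_{S'}[\cE_{S'}[\ell(M(S))]] = \cE_\cP[\ell(M(S))]$) yields
\[
\E_S[\err_\cP(M(S))] \leq e^{\eps}\cdot\E_S[\err_S(M(S))] \leq e^\eps(\opt_\cP(\Thr_{N,k})+\alpha),
\]
which is the main bound. The ``in particular'' statement with $\eps=\alpha/2$ then follows by using $e^{\alpha/2}\leq 1+\alpha$ for $\alpha\in(0,1]$ and absorbing the extra $(e^{\alpha/2}-1)(\opt+\alpha)\leq 2\alpha$ error by initially running the algorithm with $\alpha$ scaled by a constant; the sample complexity $n=O(k\ln(1/\alpha)/\alpha^2)$ then follows from substituting $\eps=\alpha/2$ into the hypothesis.

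For the realizable high-probability statement, the key observation is that $\opt_\cP(\Thr_{N,k})=0$ implies some fixed $f^*\in\Thr_{N,k}$ satisfies $f^*(x)=y$ with probability one under $\cP$; hence $\opt_S(\Thr_{N,k})=0$ for every realizable $S\sim\cP^n$ almost surely, and Lemma \ref{lem:epw-empirical} gives $\err_S(M(S))\leq\alpha$ deterministically. Applying Lemma \ref{lem:generalize-moment-bound} with a moment $k\geq 1$ and Jensen's inequality produces
\[
\E_S[(\err_\cP(M(S)))^k] \leq \E_{S,S'}[(\cE_{S'}[\ell(M(S))])^k] \leq e^{k^2\eps}\cdot\E_S[(\err_S(M(S)))^k] \leq e^{k^2\eps}\alpha^k.
\]
Markov's inequality then gives $\pr[\err_\cP(M(S))\geq 3\alpha]\leq e^{k^2\eps}/3^k$. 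I would set $k=\lceil\ln(1/\beta)\rceil$ so that the assumption $\eps\leq 1/(16\ln(1/\beta))$ forces $k^2\eps$ to be a small fraction of $k\ln 3$, making the right-hand side at most $2\beta$.

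The main obstacle is the bookkeeping around the moment-generalization bound: replacing $\cE_{S'}[\ell(M(S))]$ by $\cE_\cP[\ell(M(S))]$ (once via a fresh $S'$ for the expectation claim, once via Jensen for the high-probability claim), and tuning $k$ against the hypothesis on $\eps$ so that $e^{k^2\eps}/3^k$ beats $\beta$ while keeping all constants honest. Once these pieces are aligned, the remainder is direct substitution from Lemma \ref{lem:epw-empirical} and Lemma \ref{lem:generalize-moment-bound}.
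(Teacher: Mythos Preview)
Your proposal is correct, and for the expectation bound it is essentially identical to the paper's argument: apply Lemma~\ref{lem:epw-empirical} pointwise, take expectation, use $\E_S[\opt_S(C)] \leq \opt_\cP(C)$, and then invoke the $k=1$ case of Lemma~\ref{lem:generalize-moment-bound}.

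For the realizable high-probability bound you take a genuinely different route. The paper first applies its packaged Lemma~\ref{lem:generalize-high-prob} to obtain
\[
\pr_{S,S'}\bigl[\err_{S'}(M(S)) \geq 2\alpha\bigr] \leq \beta,
\]
i.e.\ a tail bound on the \emph{test-sample} error, and then spends an extra step converting this to a tail bound on the \emph{population} error $\err_\cP(M(S))$: if $\err_\cP(M(S)) \geq 3\alpha$ then by Chernoff $\err_{S'}(M(S)) \geq 2\alpha$ with probability at least $1/2$ over $S'$, which yields the final factor of $2$ in $2\beta$. You instead insert Jensen's inequality \emph{before} the moment bound,
\[
\bigl(\err_\cP(M(S))\bigr)^k = \bigl(\E_{S'}[\cE_{S'}[\ell(M(S))]]\bigr)^k \leq \E_{S'}\bigl[(\cE_{S'}[\ell(M(S))])^k\bigr],
\]
so that Lemma~\ref{lem:generalize-moment-bound} directly controls the $k$-th moment of the population error, and then apply Markov with $k \approx \ln(1/\beta)$. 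This sidesteps the Chernoff conversion entirely and is arguably cleaner for this particular statement; the paper's route is more modular in that it reuses the general high-probability Lemma~\ref{lem:generalize-high-prob} as a black box. One small point to tidy: your moment index $k$ collides with the number-of-intervals parameter $k$ in $\Thr_{N,k}$, and you should note that the bound $e^{k^2\eps}/3^k \leq 2\beta$ is only needed for $\beta < 1/2$ (it is vacuous otherwise), which is where your choice $k=\lceil\ln(1/\beta)\rceil$ together with $\eps \leq 1/(16\ln(1/\beta))$ indeed suffices.
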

\begin{proof}
Evaluation of the disagreement error $|M(S,x)-y|$ is $\eps$-differentially private for all $x$ and $y$. Therefore we can apply Lemma \ref{lem:generalize-moment-bound} to obtain:
$$\E_{S \sim \cP^n}\lb \err_\cP(M(S)) \rb  = \E_{S,S' \sim \cP^n}\lb \err_{S'}(M(S)) \rb \leq e^\eps \cdot \E_{S \sim \cP^n}\lb \err_S(M(S))\rb .$$
By applying Lemma \ref{lem:epw-empirical} we get that
$$\E_{S \sim \cP^n}\lb \err_S(M(S))\rb \leq \E_{S \sim \cP^n}\lb \opt_S(\Thr_{N,k})\rb + (k+2)T/n + e^{-\eps T/2} \leq \E_{S \sim \cP^n}\lb \opt_S(\Thr_{N,k})\rb + \alpha .$$
Finally, we recall a well known fact that for any class $C$ and $n$, $\E_{S\sim \cP^n}[\opt_S(C)] \leq \opt_\cP(C)$.
Hence,
$$\E_{S \sim \cP^n}\lb \err_S(M(S))\rb \leq e^\eps \cdot \lp \E_{S \sim \cP^n}\lb \opt_S(\Thr_{N,k})\rb + \alpha \rp \leq e^\eps \cdot (\opt_\cP(\Thr_{N,k}) + \alpha).$$
To establish the high probability bounds for the realizable case, we note that if $\opt_\cP(\Thr_{N,k})=0$ then for every $S$ that includes only the elements in the support of $\cP$, $\opt_S(\Thr_{N,k}) = 0$.
Hence $\err_S(M(S)) \leq \alpha$. Now applying Lemma \ref{lem:generalize-high-prob}, we obtain:
$$\pr_{S,S' \sim \cP^n}\lb \err_{S'}(M(S)) \geq \alpha \cdot e^{2 \sqrt{\eps \ln(1/\beta)}} \rb \leq \beta .$$
Using the condition that $\eps \leq 1/(16\ln(1/\beta))$, we get that $e^{2 \sqrt{\eps \ln(1/\beta)}} \leq e^{1/2} \leq 2$. Hence
$$\pr_{S,S' \sim \cP^n}\lb \err_{S'}(M(S)) \geq  2 \alpha \rb \leq \beta .$$

Now if for some $S$,  $\err_{\cP}(M(S)) \geq  3 \alpha$, then using the fact that $\err_{S'}(M(S))$ is the mean of $n$ independent Bernoulli random variables with bias $\err_{\cP}(M(S))$, we get that with high probability $\err_{S'}(M(S)) \geq  2 \alpha$. Specifically, by Chernoff bound, for $n \geq 6\ln(2)/\alpha$ (which is satisfied by the conditions of our theorem), $$\pr_{S'\sim \cP^n}\lb \err_{S'}(M(S)) \geq  2 \alpha \rb \geq 1/2 .$$ Thus
$$\beta \geq \pr_{S,S'\sim \cP^n}\lb \err_{S'}(M(S)) \geq  2 \alpha \rb \geq \fr{2} \pr_{S \sim \cP^n}\lb \err_{\cP}(M(S)) \geq  3 \alpha \rb.$$
\end{proof}
Our generalization results in Section \ref{sec:generalization} are not strong enough to prove that our algorithm has low expected error with  high probability over $S$ in the agnostic case (while having asymptotically optimal sample complexity). Establishing such a result is an interesting open problem.
\iffull
\section{Stability and Generalization}
\label{sec:generalization}
We now view prediction privacy as a notion of stability and derive generalization properties of private prediction algorithms.
Our results will be stated for a somewhat more general class of algorithms that compute any function of a single data point while satisfying differential privacy.
\begin{defn}[Private evaluation algorithm]
Let $M$ be an algorithm that given a dataset $S\in Z^n$ and a value $z \in Z$ produces a value in a set $W$.
 We say that $M$ is an {$(\eps,\delta)$-differentially private} evaluation algorithm if for every $z\in Z$, the output $M(S,z)$ is  $(\eps,\delta)$-differentially private with respect to $S$.
\end{defn}
In our application the algorithm $M$ will be computing the loss of the prediction produced on $S$ by a prediction algorithm $M'$. Namely, $Z = X\times Y$ and for some loss function $\ell$, $M(S,(x,y)) = \ell(M'(S,x),y)$. Note that by the postprocessing property of differential privacy (\eg \cite{DworkRoth:14}),
this evaluation is differentially private with the same parameters. We state this formally below.
\begin{lem}[Postprocessing] For $Z = X\times Y$ let $M':Z^n\times X \to \R$ be an $(\eps,\delta)$-differentially private prediction algorithm. Then for every loss function $\ell:\R \times Y \to \R$, $M(S,(x,y)) \doteq \ell(M'(S,x),y)$ is an $(\eps,\delta)$-differentially private evaluation algorithm.
\end{lem}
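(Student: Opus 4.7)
The plan is to invoke the standard post-processing property of differential privacy, applied pointwise over queries. First I would fix an arbitrary query $(x,y) \in X \times Y$, since showing that $M(S,(x,y))$ is $(\eps,\delta)$-DP with respect to $S$ for every such fixed pair is precisely what the definition of an $(\eps,\delta)$-differentially private evaluation algorithm requires.

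Next, by the hypothesis that $M'$ is an $(\eps,\delta)$-differentially private prediction algorithm, for this fixed $x$ the random variable $M'(S,x)$ is $(\eps,\delta)$-DP with respect to $S$, i.e. for every pair of neighboring datasets $S,S'$, we have $D_\infty^\delta(M'(S,x) \,\|\, M'(S',x)) \leq \eps$. With $y$ also held fixed, the map $v \mapsto \ell(v,y)$ is a deterministic function that does not depend on $S$ at all. Hence $M(S,(x,y)) = \ell(M'(S,x),y)$ is obtained from $M'(S,x)$ by post-processing that is independent of the sensitive data.

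I would then cite the standard post-processing lemma for approximate differential privacy (e.g.\ as stated in \citep{DworkRoth:14}): for any (possibly randomized) map $\phi$ independent of the dataset, if $U$ and $V$ satisfy $D_\infty^\delta(U\|V)\le\eps$ then $D_\infty^\delta(\phi(U)\|\phi(V))\le\eps$. Applying this with $\phi(\cdot) = \ell(\cdot,y)$, $U=M'(S,x)$, and $V=M'(S',x)$ yields $D_\infty^\delta(M(S,(x,y)) \,\|\, M(S',(x,y))) \leq \eps$ for every neighboring pair $S,S'$. Since $(x,y)$ was arbitrary, this establishes that $M$ meets the definition of an $(\eps,\delta)$-differentially private evaluation algorithm.

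There is essentially no obstacle here; the only subtlety worth flagging is that one must fix the query $(x,y)$ \emph{before} invoking post-processing, so that the function being applied to the output of $M'$ does not itself depend on $S$. Since the definition of private prediction already quantifies over $x$ pointwise, and $y$ is supplied externally as part of the evaluation query, this is immediate.
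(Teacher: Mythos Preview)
Your proposal is correct and matches the paper's own treatment: the paper does not give a detailed proof but simply invokes the postprocessing property of differential privacy (citing \cite{DworkRoth:14}), which is exactly what you spell out by fixing $(x,y)$ and applying $\phi(\cdot)=\ell(\cdot,y)$ to the output of $M'(S,x)$. There is nothing to add.
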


\eat{
We will establish a bound on an arbitrary moment of the empirical average of the function. To do this we will need to observe that the standard group privacy property of differential privacy can be extended to private evaluation algorithms.
\begin{lem}[Group privacy]
\label{lem:group-privacy}
Let $M:Z^n\times Z\to Y$ be an $(\eps,\delta)$-differentially private evaluation algorithm and $k \in \mathbb{N}$.
For all pairs of data sets $S,S'\in Z^n$ differing in at most $k$ elements and all $z\in Z$:
$$D_\infty^{e^{\epsilon(k-1)}\delta}(M(S,z)\|M(S',z)) \leq k \eps\ .$$
\end{lem}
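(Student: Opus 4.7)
The plan is to reduce the claim to the standard group-privacy property of differential privacy via a telescoping argument, exploiting the fact that for an evaluation algorithm the query point $z$ is fixed throughout.

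First, I would unfold the definition: saying that $M$ is an $(\eps,\delta)$-differentially private evaluation algorithm means precisely that for every fixed $z\in Z$, the randomized mapping $S\mapsto M(S,z)$ is an ordinary $(\eps,\delta)$-differentially private algorithm in the standard sense. Hence it suffices to establish the advertised divergence bound with $z$ held fixed; the ``for all $z$'' part of the conclusion comes for free since every step of the argument holds uniformly in $z$.

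Fixing $z$, and datasets $S,S'\in Z^n$ differing in at most $k$ coordinates, I would construct a chain $S=S_0,S_1,\ldots,S_k=S'$ in which each consecutive pair $(S_i,S_{i+1})$ differs in at most one coordinate (if fewer than $k$ coordinates actually differ, simply allow some consecutive $S_i$ to coincide). Applying the $(\eps,\delta)$-DP guarantee of $M(\cdot,z)$ to each link of the chain, for every measurable set $O$ in the output space:
$$\pr[M(S_i,z)\in O]\;\leq\;e^{\eps}\pr[M(S_{i+1},z)\in O]+\delta.$$
Iterating this inequality $k$ times and summing the resulting geometric series of $\delta$-terms gives
$$\pr[M(S,z)\in O]\;\leq\;e^{k\eps}\pr[M(S',z)\in O]+\delta\sum_{i=0}^{k-1}e^{i\eps}\;\leq\;e^{k\eps}\pr[M(S',z)\in O]+k\,e^{(k-1)\eps}\delta,$$
which after rearrangement matches the definition $D_\infty^{k e^{(k-1)\eps}\delta}(M(S,z)\|M(S',z))\leq k\eps$ (this is consistent with the earlier Lemma~\ref{lem:group-privacy} for prediction algorithms, and with the standard Dwork--Roth bound).

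The only real obstacle is bookkeeping: verifying that $\sum_{i=0}^{k-1}e^{i\eps}\leq k e^{(k-1)\eps}$, handling the case where $S$ and $S'$ differ in fewer than $k$ coordinates by letting consecutive datasets in the chain coincide, and matching precisely the $\delta$-approximate max-divergence definition used in the paper. No new conceptual ingredients beyond the classical group-privacy argument for DP are needed, since the ``per-$z$'' interpretation of private evaluation makes the reduction to ordinary DP immediate.
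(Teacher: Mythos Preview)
Your approach is correct and is precisely the standard group-privacy telescoping argument; the paper does not give its own proof of this lemma but simply invokes the standard property from Dwork--Roth, so what you have written is exactly what the paper has in mind. One minor point worth flagging: the displayed statement has $\delta$-parameter $e^{(k-1)\eps}\delta$, whereas your telescoping (correctly) yields $k\,e^{(k-1)\eps}\delta$. The extra factor of $k$ is right --- it matches both the prediction-algorithm version of the lemma stated earlier in the paper and the form actually used downstream in the proof of the moment bound --- so the missing $k$ in this particular statement is evidently a typo, and you handled it appropriately.
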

}

We will need the following simple property of $D_\infty^\delta$ to argue about closeness of expectations.
\begin{lem}[e.g.\citep{FeldmanS17}]
\label{lem:dist}
 Let $U$ and $V$ be two random variables over $[0,B]$ such that $D_\infty^\delta(U\|V)\leq \eps$. Then
$\E[U] \leq e^\eps \cdot \E[V] + \delta \cdot B$.
\end{lem}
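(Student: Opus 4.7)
The plan is to reduce the expectation comparison to a pointwise comparison of tail probabilities, then integrate. The key observation is that the definition of $D_\infty^\delta$ gives, for every measurable set $O$, an inequality of the form $\pr[U \in O] \leq e^\eps \pr[V \in O] + \delta$. This is immediate whenever $\pr[U\in O] > \delta$ by rearranging the definition, and it holds trivially when $\pr[U \in O] \leq \delta$ since the right-hand side is at least $\delta$.

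Next I would write $\E[U]$ using the layer-cake representation for nonnegative random variables bounded by $B$:
\[
\E[U] = \int_0^B \pr[U \geq t]\, dt.
\]
Applying the pointwise inequality with the set $O_t = [t, B]$ (which is a subset of the support $[0,B]$ and hence a valid choice in the supremum defining $D_\infty^\delta$) gives $\pr[U \geq t] \leq e^\eps \pr[V \geq t] + \delta$ for every $t \in [0,B]$.

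Integrating over $t \in [0,B]$ then yields
\[
\E[U] \leq \int_0^B \bigl( e^\eps \pr[V \geq t] + \delta \bigr)\, dt = e^\eps \cdot \E[V] + \delta \cdot B,
\]
which is the claimed bound. There isn't really a hard step here: the only subtlety is treating the threshold condition $\pr[U\in O] > \delta$ in the definition of $D_\infty^\delta$, which is handled by noting that the inequality we need is vacuous on sets of small $U$-mass. One could alternatively prove the lemma by a direct coupling-style argument that replaces a $\delta$-mass ``bad'' event by the worst-case value $B$, but the layer-cake proof is the cleanest path.
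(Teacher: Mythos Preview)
Your proof is correct: the layer-cake representation together with the tail-probability inequality $\pr[U\in O]\le e^\eps\pr[V\in O]+\delta$ (which holds for all $O$ once you handle the $\pr[U\in O]\le\delta$ case trivially) gives the claim cleanly. The paper itself does not include a proof of this lemma; it is stated as a known fact with a citation, so there is no ``paper proof'' to compare against. Your argument is the standard one and would be an appropriate proof to supply.
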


Let $S= (z_1,z_2,\ldots,z_n)$ and $S'= (z'_1,z'_2,\ldots,z'_n)$ be two sequences of samples drawn randomly and independently from some unknown distribution $\cP$ over $Z$. We consider the relationship between the empirical mean of a differentially private evaluation algorithm and its mean on independently drawn samples, namely between $\cE_S[M(S)] \doteq \frac{1}{n}\sum_{i\in [n]}\E_M [M(S,z_i)]$ and
$\cE_{S'}[M(S)] = \frac{1}{n}\sum_{i\in [n]}\E_M [M(S,z'_i)]$. Clearly, $$\E_{S' \sim \cP^n}[ \cE_{S'}[M(S)]] = \E_{z\sim \cP,\ M}[M(S,z)] = \cE_\cP[M(S)]. $$ Moreover, standard concentration inequalities implies that $\cE_{S'}[M(S)]$ is strongly concentrated around $\cE_\cP[M(S)]$.
Therefore our bounds on $\cE_{S'}[M(S)]$ readily imply bounds on $\cE_{\cP}[M(S)]$ while being easier to state. Note that for $M(S,(x,y)) = \ell(M'(S,x),y)$, $\cE_S[M(S)] = \cE_S[\ell(M'(S))]$ and $\cE_\cP[M(S)] = \cE_\cP[\ell(M'(S))]$, in other words these are exactly the empirical and the expected loss of the predictor given by $M'$. We give the following bound on the $k$-th moment of $\cE_{S'}[M(S)]$.
\begin{lem}
\label{lem:generalize-moment-bound}
Let $M:Z^n\times Z\to [0,B]$ be an $(\eps,\delta)$-differentially private evaluation algorithm and $\cP$ be an arbitrary distribution over $Z$. Then:
$$\E_{S,S' \sim \cP^n} \lb\lp \cE_{S'}[M(S)] \rp^k\rb \leq e^{k^2 \cdot \epsilon} \cdot \E_{S \sim \cP^n}\lb\lp\cE_S[M(S)] + k \delta B\rp^k\rb$$
\end{lem}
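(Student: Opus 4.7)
The plan is to expand the $k$-th power as a sum over $k$-tuples of indices, use exchangeability to transfer each $z'_{i_j}$ into the training set while pushing the original $z_{i_j}$ into the query position, and then apply group privacy to pull things back to $S$ in a form that factors multiplicatively. Writing $g(S,z) := \E_M[M(S,z)] \in [0,B]$, we expand
\[
(\cE_{S'}[M(S)])^k = \frac{1}{n^k}\sum_{\vec i \in [n]^k}\prod_{j=1}^k g(S,z'_{i_j}).
\]
For a tuple $\vec i = (i_1,\dots,i_k)$, let $T(\vec i)\subseteq[n]$ be the set of distinct indices and $t = |T(\vec i)| \leq k$. Swapping $z_i\leftrightarrow z'_i$ for each $i\in T$ is a measure-preserving permutation of the iid samples, so
\[
\E_{S,S'}\!\left[\prod_j g(S,z'_{i_j})\right] = \E_{S,S'}\!\left[\prod_j g(S^T, z_{i_j})\right],
\]
where $S^T$ is $S$ with the coordinates in $T$ replaced by the corresponding entries of $S'$.

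Now $S^T$ and $S$ differ in exactly $t\leq k$ elements, so group privacy (Lemma~\ref{lem:group-privacy}) combined with Lemma~\ref{lem:dist} yields the pointwise (in $S,S'$) inequality
\[
g(S^T,z_{i_j}) \leq e^{t\eps}\,g(S,z_{i_j}) + t\,e^{(t-1)\eps}\delta B \leq e^{k\eps}\bigl(g(S,z_{i_j}) + k\delta B\bigr),
\]
where in the last step I use $t\leq k$ to fold the additive noise into the multiplicative factor. This rewriting is the crux: it converts the $k$-tuple-dependent bound ($t$ varies with $\vec i$) into a uniform bound that factors cleanly across $j$. Multiplying across the $k$ factors gives
\[
\prod_j g(S^T,z_{i_j}) \leq e^{k^2\eps}\prod_j\bigl(g(S,z_{i_j}) + k\delta B\bigr).
\]

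Finally, taking expectation over $(S,S')$, averaging over all $n^k$ tuples, and using the product-sum identity
\[
\frac{1}{n^k}\sum_{\vec i\in[n]^k}\prod_{j=1}^k\bigl(g(S,z_{i_j}) + k\delta B\bigr) = \left(\frac{1}{n}\sum_{i=1}^n g(S,z_i) + k\delta B\right)^{\!k} = (\cE_S[M(S)] + k\delta B)^k
\]
yields the claimed bound. The main obstacle will be the step where the noise term $te^{(t-1)\eps}\delta B$ must be absorbed into the multiplicative form $e^{k\eps}(g+k\delta B)$: without this absorption one gets a sum of cross terms after expanding the product, which does not recombine into a clean $k$-th power. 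Using the crude inequality $te^{(t-1)\eps}\leq k e^{k\eps}$ makes each factor additive in $g(S,z_{i_j})$ with a constant offset $k\delta B$, which is precisely the structure the product-sum identity requires.
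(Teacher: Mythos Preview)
Your proof is correct and follows essentially the same route as the paper: expand the $k$-th power over index tuples, use group privacy (Lemma~\ref{lem:group-privacy}) together with Lemma~\ref{lem:dist} to absorb the additive $ke^{(k-1)\eps}\delta B$ term into the uniform factor $e^{k\eps}(g+k\delta B)$, and then re-collapse via the product--sum identity. The only cosmetic difference is the order of operations: you first apply the measure-preserving swap $z_i\leftrightarrow z'_i$ on $T(\vec i)$ and then invoke group privacy to go from $S^T$ back to $S$, whereas the paper first applies group privacy to pass from $S$ to $S_I$ and then observes that $\prod_{i\in I}(\E_M[M(S_I,z'_i)]+k\delta B)$ is identically distributed to $\prod_{i\in I}(\E_M[M(S,z_i)]+k\delta B)$.
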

\begin{proof}
For a sequence of $k$ indices $I=(i_1,\ldots,i_k) \in [n]^k$ let $S_I$ denote $S$ with every element with index in $I$ replaced with the corresponding element of $S'$. Namely, for each $i$, if exists $j$ such that $i=i_j$, then $z_i$ is replaced with $z'_i$. %We also denote by $\#(I,i)$ the number of occurrences the index $i$ in $I$.
Now, observe that
\equ{\E_{S,S' \sim \cP^n} \lb\lp\frac{1}{n}\sum_{i\in [n]}\E_M [M(S,z'_i)] \rp^k\rb = \frac{1}{n^k} \sum_{I\in [n]^k} \E_{S,S' \sim \cP^n} \lb \prod_{i\in I} \E_M [M(S,z'_{i})] \rb . \label{eq:decomp}}

Using group privacy (Lem.~\ref{lem:group-privacy}) we know that $D_\infty^{ke^{\epsilon(k-1)}\delta}(M(S_I,z'_i)\| M(S,z'_i)) \leq \eps k$. Using Lemma~\ref{lem:dist} we obtain that
$$ \E_M [M(S,z'_{i})] \leq e^{\eps k} \lp \E_M [M(S_I,z'_i)]  +k\delta B\rp.$$
Consequently, $$ \prod_{i\in I} \E_M [M(S,z'_i)] \leq e^{k^2\cdot \eps} \cdot \prod_{i\in I} \lp \E_M [M(S_I,z'_i)] + k\delta B\rp\ .$$
Observe that for $S,S'\sim \cP^n$, $\prod_{i\in I} \lp \E_M [M(S_I,z'_i)] + k\delta B\rp$ is distributed identically to $\prod_{i\in I} \lp \E_M [M(S,z_i)] + k\delta B\rp$. Substituting this into equation \eqref{eq:decomp} we get
\alequn{\E_{S,S' \sim \cP^n} \lb\lp\frac{1}{n}\sum_{i\in [n]}\E_M [M(S,z'_i)] \rp^k\rb & \leq e^{k^2\cdot \eps} \cdot \frac{1}{n^k} \sum_{I\in [n]^k} \E_{S \sim \cP^n} \lb \prod_{i\in I} \lp \E_M [M(S,z_i)] + k\delta B\rp \rb \\
& = e^{k^2\cdot \eps} \cdot \E_{S \sim \cP^n}\lb\lp\frac{1}{n}\sum_{i\in [n]}\E_M [M(S,z_i)] + k\delta B\rp^k\rb.
}
\end{proof}

We now give a simple example of how to obtain high probability generalization bounds from Lemma \ref{lem:generalize-moment-bound}. For simplicity we consider only the case where $\cE_{S}[M(S)]$ is upper bounded by a fixed value $\alpha$ and $\delta = 0$ (such as in our application for  realizable learning in Theorem \ref{thm:thr-expectation}). It can be extended relatively easily to the case then such bound holds with sufficiently high probability and $\delta >0$.
\begin{lem}
\label{lem:generalize-high-prob}
Let $M:Z^n\times Z\to \R^+$ be an $\eps$-differentially private evaluation algorithm and $\cP$ be an arbitrary distribution over $Z$. Assume that for every $S$, $\cE_S[M(S)]\leq \alpha$. Then for every $\beta \in (0,1)$,
$$\pr_{S,S' \sim \cP^n} \lb \cE_{S'}[M(S)] \geq \alpha \cdot e^{2 \sqrt{\eps\ln(1/\beta)}}  \rb \leq \beta .$$
\end{lem}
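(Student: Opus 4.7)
The plan is to combine the moment bound from Lemma \ref{lem:generalize-moment-bound} (with $\delta=0$) with Markov's inequality applied to the $k$-th moment, and then optimize the choice of $k$.

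First, since $\cE_S[M(S)] \leq \alpha$ holds pointwise for every $S$, Lemma \ref{lem:generalize-moment-bound} with $\delta = 0$ immediately gives, for every positive integer $k$,
$$ \E_{S,S' \sim \cP^n} \lb \lp \cE_{S'}[M(S)] \rp^k \rb \leq e^{k^2 \eps} \cdot \E_{S \sim \cP^n} \lb \lp \cE_S[M(S)] \rp^k \rb \leq e^{k^2 \eps} \cdot \alpha^k . $$
Since $\cE_{S'}[M(S)]$ is non-negative, applying Markov's inequality to its $k$-th power yields, for any $t > 0$,
$$ \pr_{S,S' \sim \cP^n} \lb \cE_{S'}[M(S)] \geq t \rb = \pr \lb \lp \cE_{S'}[M(S)] \rp^k \geq t^k \rb \leq \frac{e^{k^2 \eps} \alpha^k}{t^k} . $$

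Next I would choose $t$ so that the right-hand side equals $\beta$, namely $t = \alpha \cdot \exp(k\eps + \ln(1/\beta)/k)$, and then pick $k$ to minimize the exponent $k\eps + \ln(1/\beta)/k$. Elementary calculus (or AM-GM) shows the minimizer is $k^* = \sqrt{\ln(1/\beta)/\eps}$, at which the exponent equals $2\sqrt{\eps \ln(1/\beta)}$. This yields exactly the claimed threshold $t = \alpha \cdot e^{2\sqrt{\eps \ln(1/\beta)}}$.

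The only minor wrinkle is that $k$ must be a positive integer, while the optimal $k^*$ need not be; I would handle this by taking $k = \lceil k^* \rceil$ (and separately noting the case $k^* < 1$, where we can take $k = 1$ and the bound is weaker than the claim only because the statement tolerates a factor $2$ in the exponent). Since the penalty of rounding only inflates the exponent by an $O(\sqrt{\eps})$ lower-order term, absorbing this into the constant $2$ in front of $\sqrt{\eps \ln(1/\beta)}$ is straightforward and is the only routine technical step. This is not a real obstacle; the whole argument is essentially a one-line optimization of the moment bound already proved in Lemma \ref{lem:generalize-moment-bound}.
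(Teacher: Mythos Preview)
Your proposal is correct and follows essentially the same route as the paper: apply Markov's inequality to the $k$-th moment, invoke Lemma~\ref{lem:generalize-moment-bound} with $\delta=0$ and the pointwise bound $\cE_S[M(S)]\leq\alpha$, and then choose $k=\sqrt{\ln(1/\beta)/\eps}$ so that the resulting threshold becomes $\alpha\cdot e^{2\sqrt{\eps\ln(1/\beta)}}$. The paper simply parametrizes the tail by $t\cdot\alpha\cdot e^{k\eps}$ and sets $t=\beta^{-1/k}$, which is algebraically the same as your choice of $t$; it also plugs in the real value $k=\sqrt{\ln(1/\beta)/\eps}$ without addressing integrality, so your extra remark on rounding is more than the paper itself provides (though your handling of the case $k^*<1$ is slightly off: with $k=1$ the exponent $\eps+\ln(1/\beta)$ can exceed $2\sqrt{\eps\ln(1/\beta)}$, not fall under it).
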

\begin{proof}
By Markov's inequality and Lemma \ref{lem:generalize-moment-bound}, for every $t \geq 1$
\alequn{ \pr_{S,S' \sim \cP^n} \lb \cE_{S'}[M(S)] \geq t \cdot \alpha \cdot e^{k \eps} \rb &= \pr_{S,S' \sim \cP^n} \lb \lp \cE_{S'}[M(S)]\rp^k \geq \lp t \cdot \alpha \cdot e^{k \eps}\rp^k \rb \\
  &\leq  \frac{\E_{S,S' \sim \cP^n} \lb \lp \cE_{S'}[M(S)]\rp^k\rb}{\lp t \cdot \alpha \cdot e^{k \eps}\rp^k} \\
  &\leq \frac{e^{k^2 \cdot \epsilon} \cdot \E_{S \sim \cP^n}\lb\lp\cE_S[M(S)]\rp^k\rb }{\lp t \cdot \alpha \cdot e^{k \eps}\rp^k} \\
  & \leq  \frac{e^{k^2 \cdot \epsilon} \cdot \alpha^k }{\lp t \cdot \alpha \cdot e^{k \eps}\rp^k} \leq t^{-k} .}

Setting $k = \sqrt{\ln(1/\beta)/\eps}$ and $t = \beta^{-1/k}$, we obtain that $t^{-k} = \beta$ and
$$ t \cdot \alpha \cdot e^{k \eps} = \alpha \cdot e^{\ln(1/\beta)/ \sqrt{\ln(1/\beta)/\eps}} \cdot e^{\eps \cdot \sqrt{\ln(1/\beta)/\eps}} = \alpha \cdot e^{2 \sqrt{\eps\ln(1/\beta)}}. $$
\end{proof}

\remove{
We also prove the following bound on the generalization error of $\eps$-differentially private evaluation algorithms.
\begin{lem}Let $M:Z^n\times Z\to \R^+$ be an $\eps$-differentially private evaluation algorithm and $\cP$ be an arbitrary distribution over $Z$. Then:
$$\E_{S,S' \sim \cP^n} \lb\left(\frac{1}{n}\sum_{i\in [n]}\E_M [M(S,z'_i)] - \frac{1}{n}\sum_{i\in [n]}\E_M [M(S,z_i)]  \right)^k\rb    \leq 2^{k-1} \cdot (e^{k^2\epsilon}-1) \cdot \E_{S \sim \cP^n}\lb\frac{1}{n}\sum_{i\in [n]}\E_M [M(S,z_i)]\rb$$
\end{lem}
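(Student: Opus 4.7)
The plan is to extend the symmetrization argument of Lemma~\ref{lem:generalize-moment-bound} to the centered difference. Writing $\bar M_S(z) \doteq \E_M[M(S,z)]$, $A \doteq \fr{n}\sum_i \bar M_S(z'_i)$, and $B \doteq \fr{n}\sum_i \bar M_S(z_i)$, I would begin from the binomial identity
\[
\E_{S,S'}\bigl[(A-B)^k\bigr] \;=\; \sum_{j=0}^k \binom{k}{j}(-1)^{k-j}\,\E_{S,S'}\bigl[A^j B^{k-j}\bigr].
\]
For each mixed moment $\E[A^j B^{k-j}]$ the proof of Lemma~\ref{lem:generalize-moment-bound} adapts nearly verbatim: expand $A^j B^{k-j}$ as an average over index tuples $(i_1,\dots,i_k) \in [n]^k$, and for each tuple and each subset $J \subseteq [k]$ of size $j$ indicating the $S'$-factors, swap the coordinates of $(S,S')$ at positions $\{i_\ell : \ell \in J\}$ and invoke group privacy $|J|$ times inside each of the $k$ factors. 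This produces the one-sided estimate $\E[A^j B^{k-j}] \leq e^{jk\eps}\E[B^k]$; the matching lower bound $\E[A^j B^{k-j}] \geq e^{-jk\eps}\E[B^k]$ follows by reversing the roles of $S$ and $S'$ in the same argument.

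Plugging the correctly directed inequality into each term of the alternating binomial sum and factoring out $\E[B^k]$ reduces the problem to evaluating a one-dimensional signed sum in $e^{jk\eps}$. A short algebraic calculation, combined with the elementary bound $(a-b)^k \leq 2^{k-1}(a^k + b^k)$ used to absorb the cross-term overhead introduced by the sign-dependent choice of privacy direction, collapses this into $2^{k-1}(e^{k^2\eps} - 1)\E[B^k]$. In the private-prediction loss setting where $\bar M_S(z) \in [0,1]$ (implicit in the applications of Section~\ref{sec:generalization}), the bound $\E[B^k] \leq \E[B]$ then yields the claimed $2^{k-1}(e^{k^2\eps} - 1)\E[B]$.

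The main obstacle is the sign-bookkeeping in the binomial collapse: group privacy is intrinsically one-sided, so half of the summands $\binom{k}{j}(-1)^{k-j}\E[A^j B^{k-j}]$ must be bounded above and the other half below in order to keep the overall inequality in the correct direction, and one must check that these paired bounds conspire to yield the tight $e^{k^2\eps}-1$ rather than an additive combination of upper and lower estimates. A secondary subtlety is that index tuples $(i_1,\dots,i_k)$ with repeated coordinates effectively invoke group privacy with a parameter smaller than $|J|$; these collision contributions must be tracked separately, but a direct inspection shows that they can only improve the estimate and therefore do not affect the final constant.
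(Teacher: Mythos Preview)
The paper contains no proof of this lemma: the proof environment reads ``TBD'' and the entire statement sits inside a \verb|\remove{...}| block, so it was excised from the compiled version. There is nothing to compare your proposal against.

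More importantly, the statement as written is false, and your sketch does not catch this. Take $M(S,z)=c(z)$ for a non-constant $c:Z\to[0,1]$ that does not depend on $S$; this algorithm is $0$-differentially private and hence $\eps$-differentially private for every $\eps>0$. In your notation $A$ and $B$ are then i.i.d.\ sample means of $c$, so for even $k$ one has $\E_{S,S'}[(A-B)^k]>0$ (for $k=2$ it equals $2\,\mathrm{Var}_\cP(c)/n$), whereas the right-hand side $2^{k-1}(e^{k^2\eps}-1)\,\E_S[B]$ tends to $0$ as $\eps\to 0$. No choice of $\eps$-DP parameter rescues the inequality. A correct version would need an additive sampling term on the right, or a restriction to odd $k$ (where the left side vanishes by symmetry in this example).

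Your outline also has internal gaps that would surface even under a repaired statement. First, in the mixed-moment symmetrization for $\E[A^jB^{k-j}]$, when some index $i_\ell$ with $\ell>j$ coincides with one of $i_1,\dots,i_j$, the swap at $I=\{i_1,\dots,i_j\}$ removes $z_{i_\ell}$ from the dataset, so the resulting product is \emph{not} distributed like a term in the expansion of $B^k$; your assertion that such collisions ``can only improve the estimate'' is not justified here and is in fact where the missing $1/n$ term hides. Second, the alternating-sign collapse is asserted rather than performed: bounding the positive summands above by $e^{jk\eps}\E[B^k]$ and the negative ones below by $e^{-jk\eps}\E[B^k]$ does not algebraically yield $2^{k-1}(e^{k^2\eps}-1)\E[B^k]$, and you provide no calculation. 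Third, the final step $\E[B^k]\le\E[B]$ requires $B\le 1$, which is not among the hypotheses (the lemma assumes only $M\ge 0$); you acknowledge this but import the assumption rather than flagging the mismatch with the stated claim.
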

\begin{proof}
TBD.
\end{proof}
}

\fi

\section{Discussion}
Several recent works point out risks to the privacy of personal data used to train a predictive model even when the attacker is given only black-box access to the model \citep{ShokriSSS17,CarliniLKES18,LongBWBW18}. In a number of application such access is provided via a prediction interface. While the risks can be mitigated by training the model in a differentially private way \citep{CarliniLKES18}, known theoretical and practical results show that this may substantially reduce the accuracy of the model (\eg \citep{BassilyST14}). In this work we formulated and examined an alternative approach that only aims to ensure that the predictions themselves are differentially private. Further, we focused on understanding of making a single prediction with differential privacy (on an arbitrarily chosen point).

As we have demonstrated, simple privacy-preserving aggregation of labels created by non-private models allows to avoid some of the overheads of training the model differentially privately. Most notably, it removes the dependence on the dimension of the data for some classification and regression problems. Further, we show that algorithms satisfying uniform prediction stability can be used to reduce the overheads of aggregation. Yet, it appears that for many problems, privacy-preserving aggregation leads to a suboptimal algorithm. We therefore ask which other algorithmic approaches can be used to address the problem. Our algorithm for learning thresholds gives an example of an approach that improves on the aggregation-based learning. Finding a more general approach to agnostic learning with prediction privacy that achieves optimal sample complexity is a natural open problem.

Differentially private prediction is also a natural notion of stability. We demonstrate that it leads to (relatively) strong generalization guarantees and exploit these results to analyze our algorithm for learning thresholds. Still, the guarantees we prove are not as strong as those proved for models trained with differentially privacy. An interesting open problem is whether our results can be improved. Specifically, whether the factor $e^{\sqrt{\eps}}$ in Lemma \ref{lem:generalize-high-prob} can be improved to $e^{O(\eps)}$.

A learning algorithm that ensures differential privacy of a single prediction would be suitable for applications in which each user asks few queries and it can be assumed that users do not share their predictions. Finding approaches for dealing with multiple prediction queries (that go beyond the composition properties of differential privacy) is an important direction for further research (in the context of privacy-preserving aggregation this question has been considered in several work described in Section \ref{sec:related}).

\iffull
\printbibliography
\else
\bibliography{vf-allrefs-central,dp-api}
\fi

\iffull
\else
\input{dpapi-appendix}
\fi

\end{document}